\begin{document}

\title{A Collective Learning Framework to\\ Boost GNN Expressiveness}

\author{Mengyue Hang \\
Department of Computer Science \\
Purdue University \\
\texttt{hangm@purdue.edu}
\And
Jennifer Neville \\
Department of Computer Science \\
Purdue University \\
\texttt{neville@cs.purdue.edu}\\
\And
Bruno Ribeiro \\
Department of Computer Science \\
Purdue University \\
\texttt{ribeiro@cs.purdue.edu}
}

\maketitle

\begin{abstract}
Graph Neural Networks (GNNs) have recently been used for node and graph classification tasks with great success.
Unfortunately, existing GNNs are not universal (i.e., most-expressive) graph representations.
In this work, we propose {\em collective learning}, a general collective classification Monte Carlo approach for graph representation learning that provably increases the representation power of existing GNNs. 
We show that our use of Monte Carlo sampling is key in these results.
Our experiments consider the task of inductive node classification across partially-labeled graphs using five real-world network datasets and demonstrate a consistent, significant boost in node classification accuracy when our framework is combined with a variety of state-of-the-art GNNs.

\end{abstract}

\section{Introduction}

Graph Neural Networks (GNNs) have recently shown great success at node and graph classification tasks \citep{hamilton2017inductive,kipf2016semi,luan2019break,xu2018powerful}. GNNs have been applied in both transductive settings (where the test nodes are embedded in the training graph) and inductive settings (where the training and test graphs are disjoint). 
Despite their success, existing GNNs are no more powerful than the Weisfeiler-Lehman (WL) graph isomorphism test, and thus, inherit its shortcomings, i.e. they are not universal (most-expressive) graph representations~\cite{chen2019equivalence,morris2019weisfeiler,murphy2019relational,xu2018powerful}.
In other words, these GNNs (which we refer to as WL-GNNs and also include GCNs~\cite{kipf2016semi}) are not expressive enough for some node classification tasks, since their representation can provably fail to distinguish non-isomorphic nodes with different labels.

At the same time, a large body of work in relational learning has focused on strengthening poorly-expressive (i.e., local) classifiers in relational models (e.g., relational logistic regression, naive Bayes, decision trees \citep{neville2003learning}) in {\em collective classification} frameworks, by incorporating dependencies among node labels and propagating inferences during classification to improve performance, particularly in semi-supervised settings~\citep{koller2007introduction,pfeiffer2015overcoming,xiang2008pseudolikelihood}.

In this work, we theoretically and empirically investigate the hypothesis that, by explicitly incorporating label dependencies among neighboring nodes via predicted label sampling---akin to how collective classification improves not-so-expressive classifiers---it is possible to devise an add-on training and inference procedure that can improve the expressiveness of any existing WL-GNN for inductive node classification tasks, which we denote {\em collective learning}. 

{\bf Contributions.} We first show that collective classification is provably unnecessary if one can obtain GNNs that are most-expressive. 
Then, because current WL-GNNs are {\em not} most-expressive, we propose an add-on general collective learning framework to GNNs that provably boosts their expressiveness, beyond that of an {\em optimal}  WL-GNN\footnote{We use the term optimal WL-GNN to refer to the most expressive version of a GNN--one that has the same distinguishing power as a Weisfeiler-Lehman test. Note this is not a universal graph representation.}. Our framework, which we call \ourmodel, involves the use of self-supervised learning and sampled embeddings to incorporate node labels during inductive learning---and it can be implemented with {\em any} component GNN. 
In addition to being strictly more expressive than optimal WL-GNNs, we also show how collective learning improves finite $d$-layer WL-GNNs in practice by extending their power to distinguish non-isomorphic nodes from $d-$hop neighborhoods to $2d$.
We also show that, in contrast to our proposed framework, attempts to incorporate collective classification ideas into WL-GNNs without sampled embeddings (e.g., \cite{fan2019recurrent,qu2019gmnn,vijayan2018hopf}) cannot increase expressivity beyond that of an optimal WL-GNN.  
Our empirical evaluation shows \ourmodel achieves a consistent improvement of node classification accuracy, across a variety of state-of-the-art WL-GNNs, for tasks involving unlabeled and partially-labeled test graphs.


\vspace{-2mm}
\section{Problem formulation}
\vspace{-1mm}

We consider the problem of \textit{inductive} node classification across partially-labeled graphs, which takes as input a graph $G^\text{(tr)} = (V^\text{(tr)}, E^\text{(tr)}, \mX^\text{(tr)}, \mY^\text{(tr)}_L )$ for training, where $V^\text{(tr)}$ is a set of $n^\text{(tr)}$ vertices, $E^\text{(tr)} \subset V^\text{(tr)} \times V^\text{(tr)}$ is a set of edges with adjacency matrix $\mA^\text{(tr)}$, $\mX^\text{(tr)}$ is a $n^\text{(tr)} \times p$ matrix containing node attributes as $p$-dimensional vectors, and $\mY^\text{(tr)}_L$ is a set of observed labels (with $C$ classes) of a connected set of nodes $V^\text{(tr)}_L \subset V^\text{(tr)}$, where $V^\text{(tr)}_L$ is assumed to be a proper subset of $V^\text{(tr)}$, noting that $V^\text{(tr)}_L \neq \emptyset$.
Let $\mY_U^\text{(tr)}$ be the unknown labels of nodes $V^\text{(tr)}_U = V^\text{(tr)} \setminus V^\text{(tr)}_L$.
The goal is to learn a {\em joint} model of $\mY_U^\text{(tr)} \sim P(\mY_U | G^\text{(tr)})$ and apply this same model to predict hidden labels $\mY_U^\text{(te)}$ in another test graph $G^\text{(te)}$, i.e., $\hat{\mY}_U^\text{(te)} = \argmax_{\mY_U} P(\mY_U | G^\text{(te)})$. The test graph $G^\text{(te)}$ can be partially labeled or unlabeled so $V^\text{(te)}_L \supseteq \emptyset$.

Graph Neural Networks (GNNs), which aggregate node attribute information to produce node representations, have been successfully used for this task. At the same time, relational machine learning (RML) methods, which use collective inference to boost the performance of local node classifiers via (predicted) label dependencies, have also been successfully applied to this task. 

Since state-of-the-art GNNs are not most-expressive \cite{morris2019weisfeiler,murphy2019relational,xu2018powerful}, collective classification ideas may help to improve the expressiveness of GNNs. In particular, collective inference methods often {\em sample} predicted labels (conditioned on observed labels) to improve the local representation around nodes and approximate the joint distribution $P(\mY_U | G^\text{(te)})$. We also know from recent research that sampling randomized features can boost GNN expressiveness \cite{murphy2019relational}.    
This leads to the key conjecture of this work \Cref{hypo: more_expressive}, which we prove theoretically in \Cref{sec: theory} and validate empirically by extensive experimentation in \Cref{sec:results}.

\begin{hypothesis}
\label{hypo: more_expressive}
Since current Graph Neural Networks (e.g. GCN, GraphSAGE, TK-GNN) cannot produce most expressive graph representations, collective learning (which takes label dependencies into account via Monte Carlo sampling) can improve the accuracy of node classification by producing a more expressive graph representation. 
\end{hypothesis}

Why? Because WL-GNNs can extract more information about local neighborhood dependencies via sampling~\cite{murphy2019relational}, and sampling predicted labels allows the GNN to pay attention to the relationship between node attributes, the graph topology, and label dependencies in local neighborhoods. 
With {\em collective learning}, GNNs will be able to incorporate more information into the estimated joint label distribution. 
%
%
Next, we describe our {\em collective learning} framework.

\section{Proposed framework: {\em Collective Learning} to boost GNNs}
\label{sec:framework}



In this section, we outline \ourmodel. It is a general framework to incorporate any GNN, and combines {\em self-supervised learning approach} and {\em Monte Carlo embedding sampling} in an {\em iterative} process to improve inductive learning on partially labeled graphs.


Specifically, given a partially labeled training graph $G^\text{(tr)} = (V^\text{(tr)}, E^\text{(tr)}, \mX^\text{(tr)}, \mY_L^\text{(tr)})$ with adjacency matrix $\mA^\text{(tr)}$ and a partially-labeled test graph $G^\text{(te)} = (V^\text{(te)}, E^\text{(te)}, \mX^\text{(te)}, \mY_L^\text{(te)})$ with adjacency matrix $\mA^\text{(te)}$. 
The goal of inductive node classification task is to train a joint model on $G^\text{(tr)}$ to learn $P(\mY_U|G^\text{(tr)})$ 
and apply it to $G^\text{(te)}$ by replacing the input graph $G^\text{(tr)}$ with $G^\text{(te)}$.
Suppose the graphs $G^\text{(tr)}$ and $G^\text{(te)}$, we can define $\mY^\text{(tr)}_L$ as a binary (0-1) matrix of dimension $|V^\text{(tr)}|\times C$, and $\mY_L^\text{(te)}$ of dimension $|V^\text{(te)}|\times C$, where the rows corresponding to the one-hot encoding of the (available) labels.

{\bf (Background) GNN and representation learning.}
Given a partially labeled graphs $G^\text{(tr)}$, WL-GNNs generate node representation by propagating feature information throughout the graph. Specifically, $\forall v \in V^\text{(tr)},$ 
\begin{align}
P(\mY_v|\mX^\text{(tr)}, \mY^\text{(tr)}_L, \mA^\text{(tr)})  = \sigma({\bf W}\mZ_v + {\bf b}), \label{equ:gnn_obj}
\end{align}
where $\mZ_v = \text{GNN}(\mX^\text{(tr)}, \mA^\text{(tr)}; \Theta)_v $ is the GNN representation of node $v$, $\sigma(\cdot)$ is the softmax activation, and $\Theta$, ${\bf W}$ and ${\bf b}$ are model parameters, which are learned by minimizing the cross-entropy loss between true labels $\mY^\text{(tr)}_L$ and the predicted labels.

{\bf The collective learning framework.} Following \Cref{hypo: more_expressive}, we propose Collective Learning GNNs (\ourmodel), which includes label information as input to GNNs to produce a more expressive representation. 
The overall framework follows three steps: ({\bf Step 1}) Include labels in the input using a random mask; ({\bf Step 2}) Sample predicted labels for whatever is masked, use as input to WL-GNN that also takes all node labels as input, and average representations of the WL-GNN over the sampled predicted labels; ({\bf Step 3}) Perform one optimization step by minimizing a negative log-likelihood upper bound (per \Cref{prop:opt}). Collective learning for WL-GNNs then consists of iterating over Steps 1-3 for $t=1,\ldots,T$ iterations. Finally, once optimized, we perform inference via Monte Carlo estimates.



{\bf Step 1. Random masking and self supervised learning.} 
The input to GNNs is typically the full graph $G^\text{(tr)}$. If we included the observed labels $\mY^\text{(tr)}_L$ directly in the input, then it would be trivial to learn a model that predicts part of the input. Instead, we either ({\em scenario test-unlabeled}) mask all label inputs if the test graph $G^\text{(te)}$ is expected to be unlabeled; or ({\em scenario test-partial}) if $G^\text{(te)}$ is expected to have partial labels, we apply a mask to the labels we wish to predict in training so they do not appear in the input $\mY^\text{(tr)}_L$. 

In ({\em scenario test-partial}), where $G^\text{(te)}$ is expected to have some observed labels, at each stochastic gradient descent step, we randomly sample a binary mask $\mM \sim \text{Uniform}(\cM)$ from a set of masks, where $\mM$ is a $|V^\text{(tr)}|\times C$ binary (0-1) matrix with the same $|V^\text{(tr)}|$-dimensional vector in each column.
By applying the mask on the observed labels $\mY_L^\text{(tr)}$, the set of true labels is effectively partitioned into two parts, where part of true labels $\mY_L^\text{(tr)} \odot \mM$ are used as input to \ourmodel, and the other part $\mY_L^\text{(tr)} \odot \overline{\mM}$ are used as optimization target, where $\overline{\mM} := {\bf 1} - \mM$ is the bitwise negated matrix of $\mM$. 

{\bf Step 2. \ourmodel loss and its representation averaging.}
At the $t$-th step of our optimization ---these steps can be coarser than a gradient step--- we either ({\em scenario test-partial}) sample a mask $\mM^{(t)} \sim \text{Uniform}(\cM)$ or ({\em scenario test-unlabeled}) set $\mM^{(t)} = {\bf 0}$.
For now, we
assume we can sample $\hat{\bf Y}^{(t-1)}=(\hat{\mY}_v^{(t-1)})_{v \in V^\text{(tr)}}$ from an estimate of the distribution $P(\mY_v^\text{(tr)}|\mX^\text{(tr)}, \mY_L^\text{(tr)} \odot \mM^{(t)}, \mA^\text{(tr)})$ ---we will come back to this assumption soon. 
Let $\mX_{\mY_L^\text{(tr)},\hat{\mY}^{(t-1)},\mM^{(t)}}^\text{(tr)}$ be the matrix  concatenation between $\mY^{\text{(tr)}}_{L}\odot \mM^{(t)} + \hat{\mY}^{(t-1)}\odot \overline{\mM}^{(t)}$ and $\mX^\text{(tr)}$, where again $\overline{\mM} := {\bf 1} - \mM$ is the bitwise negated matrix of $\mM$.
Let
\begin{equation}\label{eq:Z_labeled}
\begin{split}
\mZ_v^{(t)}(\mM^{(t)};\Theta)= \expectation_{\hat{\mY}^{(t-1)}} \big[ \text{GNN}(\mX_{\mY_L^\text{(tr)},\hat{\mY}^{(t-1)},\mM^{(t)}}^\text{(tr)} , \mA^\text{(tr)} ; \Theta)_v \big],
\end{split}
\end{equation}
where GNN represents an arbitrary graph neural network model and $Z_v^t$ is the \ourmodel's representation obtained for node $v \in V^\text{(tr)}$ at step $t \geq 1$. 

Our optimization is defined over the expectation of $\mZ_v^{(t)}(\mM^{(t)})$ w.r.t.\ to the sampled predicted labels $\hat{\mY}^{(t-1)}$ (\Cref{eq:Z_labeled}) and over a loss averaged over all sampled masks (noting that the case where $\mM^{(t)} = {\bf 0}$ is trivial):
\begin{equation}
\label{eq:mcgnn_obj_mask}
\begin{split}
   \Theta_{t},&{\bf W}_{t},{\bf b}_{t} 
   = \argmin_{\Theta,{\bf W},{\bf b}} \mathbb{E}_{\mM^{(t)}} \bigg[-{\sum_{v\in V_L^\text{(tr)}}\overline{\mM}^{(t)}_v \log  \sigma({\bf W}\mZ_v^{(t)}(\mM^{(t)};\Theta) + {\bf b})_{y^\text{(tr)}_v}} \bigg],
\end{split}
\end{equation}
where again, $\sigma(\cdot)$ is the softmax activation function, and $V_L^\text{(tr)}$ are the labeled nodes in training graph.

{\bf Obtaining $\hat{\bf Y}^{(t-1)}$.}
At iteration $t$, we use the learned \ourmodel model parameter $\Theta_{t-1}$ to obtain $\mZ_v^{(t-1)}$ according to \Cref{eq:Z_labeled} and use the \ourmodel model parameters ${\bf W}_{t-1},{\bf b}_{t-1}$ to obtain the label prediction recursively
\begin{equation}\label{eq:yhat}
    \hat{\mY}^{(t-1)}_v \sim \text{Categorical}(\sigma({\bf W}_{t-1}\mZ^{(t-1)}_v(\mM^{(t)}
    ;\Theta_{t-1}) + {\bf b}_{t-1})), \; v \in V^\text{(tr)},
\end{equation}

starting the recursion with $\hat{\mY}^{(t-2)} = {\bf 0}$. 

{\bf Step 3. Stochastic optimization of \Cref{eq:mcgnn_obj_mask}.}
\Cref{eq:mcgnn_obj_mask} is based on a pseudolikelihood, where the joint distribution of the labels $\{\mY_v^\text{(tr)} : v \in V_L^\text{(tr)} \text{ s.t. } \overline{\mM}^{(t)}_v = 1\}$ is decomposed as marginal distributions resulting in the sum over $V_L^\text{(tr)}$ in \Cref{eq:mcgnn_obj_mask}.
In order to optimize \Cref{eq:mcgnn_obj_mask}, we compute gradient estimates w.r.t.\ $\Theta$ and ${\bf b}$ using the
following sampling procedure.

1.\ We first need to compute an unbiased estimate of $\{\mZ^{(t-1)}_v\}_{v \in V_L^\text{(tr)}}$ in \Cref{eq:Z_labeled} using $K$ i.i.d.\ samples $\hat{\mY}^{(t-1)}$ from the model obtained at time step $t-1$ (as describe above).
Note that the time/space complexity of the \ourmodel is $K$ times the time/space complexity of the corresponding GNN model as we have to compute $K$ representations for each node at each stochastic gradient step. 

2.\ Next, we need an unbiased estimate of the expectation over $\mM^{(t)}$ in \Cref{eq:mcgnn_obj_mask}.
In {\em (scenario test-partial)} the unbiased estimates are obtained by sampling $\mM^{(t)} \sim \text{Uniform}(\cM)$, in the {\em (scenario test-unlabeled)} the value obtained is exact since $\mM^{(t)} = {\bf 0}$.
\Cref{prop:opt} shows that the above procedure is a proper surrogate upperbound of the loss function

{\bf Inference with learned model.}
We apply the same procedure as in {\em obtaining $\hat{\bf Y}^{(t-1)}$}, but transferring the learned parameters to a different test graph.
Specifically, at iteration $t$, \ourmodel parameters $\Theta_t, \mW_t, \vb_t$ are learned according to \Cref{eq:mcgnn_obj_mask} on the training graph $G^\text{(tr)}$. Suppose the iteration contains $J$ gradient steps, thus $J$ sampled masks are used. Given an any-size attributed graph $G^{(te)}$, we sample another $J$ masks $\mM^{(t)}$ of size $|V^\text{(te)}|$, either ({\em scenario test-partial}) sampling $\mM^{(t)} \sim \text{Uniform}(\cM)$ or ({\em scenario test-unlabeled}) set $\mM^{(t)} = {\bf 0}$. We also sample $K$ predicted labels $\{\hat{\bf Y}^{\text{(te)}, (t-1)}_1, \cdots, \hat{\bf Y}^{\text{(te)}, (t-1)}_K\}$ from the predicted probability distribution where $\hat{\bf Y}^{\text{(te)}, (t-1)}_k=(\hat{\mY}_v^{(t-1)})_{v \in V^{(te)}}$ 
. The node representations for $v \in V^\text{(te)}$ are obtained using $\mM^{(t)}$ and $\hat{\bf Y}^{\text{(te)}, (t-1)}_{1,\cdots,K}$:  
\begin{equation}\label{eq:Z_labeled_test}
\mZ_v^{(t)} = \frac{1}{J K} \sum_{j=1}^J \sum_{k=1}^K \text{GNN}(\mX^\text{(te)}_{\mY_L^\text{(te)},\hat{\mY}_k^{\text{(te)},(t-1)},\mM^{(t)}_j} , \mA^\text{(te)} ; \Theta_t)_v , \; \forall v \in V^\text{(te)},
\end{equation}
where again, $J$ is the number of gradient steps per iteration $t$ and $K$ is the number of Monte Carlo samples of $\hat{\bf Y}^{\text{(te)}, (t-1)}$.
Then the label predictions are obtained using the learned \ourmodel parameters $\mW_t, \vb_t$:
\[
\hat{\mY}_v^{(t)} \sim \text{Categorical}(\sigma({\bf W}_{t}\mZ_v^{(t)} + {\bf b}_{t})_v), \; \forall v \in V^\text{(te)}.
\]
Note that the test label predictions $\hat{\mY}^{\text{(te)},(t)}$ are also recursively updated, and the recursion starts with $\hat{\mY}^{\text{(te)},(t-2)} = {\bf 0}$. 




\section{Collective learning analysis} \label{sec: theory}

{\em Is collective classification able to better represent target label distributions than graph representation learning?}
The answer to this question is both {\em yes (for WL-GNNs) and no (for most-expressive representations)}.
\Cref{thm:col} shows that a most-expressive graph representation~\citep{murphy2019relational,maron2019universality, srinivasan2019equivalence} would not benefit from a collective learning boost.
All proofs can be found in the \Appendix.
%
%
\begin{restatable}[Collective classification can be unnecessary]{theorem}{thmCCun}
\label{thm:col}
Consider the task of predicting node labels when no labels are available in test data.
Let $\Gamma^\star(v,G^\text{(te)})$ be a most-expressive representation of node $v \in V^\text{(te)}$ in graph $G^\text{(te)}$ . 
Then, for any collective learning procedure predicting the class label of $v \in V^\text{(te)}$, there exists a classifier that takes $\Gamma^\star(v,G^\text{(te)})$ as input and predicts the label of $v$ with equal or higher accuracy.
\end{restatable}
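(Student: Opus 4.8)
The plan is to reduce the statement to the elementary decision-theoretic fact that the Bayes-optimal classifier dominates every other predictor, and then to observe that this Bayes classifier can be realized as a function of $\Gamma^\star$. First I would make precise the defining property of a most-expressive representation: $\Gamma^\star(u,H) = \Gamma^\star(v,G)$ holds exactly when $(u,H)$ and $(v,G)$ are isomorphic (there is a graph isomorphism carrying $u$ to $v$). Thus $\Gamma^\star$ is a maximal isomorphism invariant, and consequently \emph{every} isomorphism-invariant function of $(v,G^\text{(te)})$ factors as $h\big(\Gamma^\star(v,G^\text{(te)})\big)$ for some map $h$.

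Next, under the standing assumption that the label-generating distribution is invariant to graph isomorphism---the premise that makes inductive node classification across disjoint graphs well-posed---the posterior $P(\mY_v = c \mid v, G^\text{(te)})$ is itself isomorphism invariant in $(v,G^\text{(te)})$. By the factorization above there is a map $g$ with $\argmax_c P(\mY_v = c \mid v, G^\text{(te)}) = g\big(\Gamma^\star(v,G^\text{(te)})\big)$; this composition $g \circ \Gamma^\star$ is precisely the Bayes classifier, and it attains the maximal achievable accuracy $\mathbb{E}\big[\max_c P(\mY_v = c \mid v, G^\text{(te)})\big]$ among all decision rules measurable with respect to $(v,G^\text{(te)})$.

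It then remains to show that any collective learning procedure is such a rule and hence cannot exceed this accuracy. Because no labels are observed in the test graph (scenario \emph{test-unlabeled}), the procedure's only genuine input is $(v,G^\text{(te)})$: the predicted labels it samples are generated from the graph through permutation-equivariant GNN layers, so conditionally on $(v,G^\text{(te)})$ they are independent of the true $\mY_v$ and inject no additional information. Its (possibly randomized) output is therefore a randomized function of $(v,G^\text{(te)})$; averaging over its internal Monte Carlo randomness, its accuracy is a convex combination of accuracies of deterministic rules, each bounded by the Bayes accuracy. Hence $g \circ \Gamma^\star$ predicts the label of $v$ with equal or higher accuracy, which is the claim.

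The step I expect to be the main obstacle is this last reduction---formally certifying that the iteratively sampled embeddings carry no information about $\mY_v$ beyond what $(v,G^\text{(te)})$ already supplies. Since each iterate is a function of earlier label predictions, the clean way to handle it is a sufficiency / data-processing argument showing the entire iterate is measurable with respect to $(v,G^\text{(te)})$, combined with the permutation equivariance of every GNN layer to guarantee that the induced predictor is isomorphism invariant and thus factors through $\Gamma^\star$.
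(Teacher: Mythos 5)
Your proposal is correct, but it takes a genuinely different route from the paper's proof. The paper argues by a dichotomy: either the true label $Y_v$ is conditionally independent of the collectively predicted labels $\hat{\mY}^{(t)}$ given $\Gamma^\star$ and the classifier (in which case collective classification is vacuous), or it is not, and in the latter case the paper invokes Theorem 1 of Srinivasan and Ribeiro --- $Y_v \stackrel{\text{a.s.}}{=} \varphi'\bigl(\Gamma^\star(v,G^\text{(te)}),\epsilon_v\bigr)$ with $\epsilon_v$ exogenous noise --- to conclude that, since $\hat{\mY}^{(t)}$ is fully determined by the graph and the model, the dependence can only reflect ``domain knowledge'' about $\varphi'$, which is then informally absorbed into a new classifier $\varphi^\dagger$ taking $\Gamma^\star$ as input. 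You replace this dichotomy-plus-absorption step with the standard decision-theoretic argument: the posterior $P(Y_v = c \mid v, G^\text{(te)})$ is isomorphism-invariant, hence factors through the maximal invariant $\Gamma^\star$, so the Bayes rule is realizable as $g\circ\Gamma^\star$ and dominates every (possibly randomized) predictor whose output is conditionally independent of $Y_v$ given $(v,G^\text{(te)})$ --- which any collective learning procedure is in the unlabeled-test scenario. Both proofs rest on the same two pillars (the factorization property of $\Gamma^\star$, which is exactly the deterministic content of the Srinivasan--Ribeiro theorem, and the fact that sampled predictions inject no exogenous information about $Y_v$), but your version buys precision: it exhibits the dominating classifier explicitly as the Bayes rule with accuracy $\mathbb{E}\bigl[\max_c P(Y_v = c \mid v, G^\text{(te)})\bigr]$, avoids the paper's informal ``incorporate the domain knowledge into $\varphi^\dagger$'' step, and makes explicit the isomorphism-invariance assumption on the label-generating process that the paper uses only implicitly through the cited theorem. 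One caveat you should fold in: a trained procedure also depends on $G^\text{(tr)}$, so your convex-combination step should average over that source of randomness as well; this is harmless because $G^\text{(tr)}$ is independent of $Y_v$ given $G^\text{(te)}$ in the inductive setting, but it deserves a sentence.
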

%
%
While \Cref{thm:col} shows that the most-expressive graph representation does not need collective classification, WL-GNNs are not most-expressive~\cite{morris2019weisfeiler,murphy2019relational,xu2018powerful}. Indeed, \Cref{thm:cl-power} and \Cref{prop:expand-power} show that  \ourmodel boosts the expressiveness of optimal WL-GNN and practical WL-GNNs, respectivelly.
Then, we show that the stochastic optimization in Step 3 optimizes a loss surrogate upper bound.

\subsection{Expressive power of \ourmodel}
\citet{morris2019weisfeiler} and \citet{xu2018powerful} show that WL-GNNs are no more powerful in distinguishing non-isomorphic graphs and nodes as the standard Weisfeiler-Lehman graph isomorphism test (1-WL or just WL test). 
Two nodes are assumed isomorphic by the WL test if they have the same color assignment in the stable coloring. 
%
The node-expressivity of a parameterized graph representation $\Gamma$ (with parameter $\Gamma(\cdot;\mW)$) can then be determined by the set of graphs for which $\Gamma$ can identify non-isomorphic nodes:
\begin{equation*}
    \cG(\Gamma) =  \{  G :  \exists \mW^\star_G, \text{ s.t. } \forall u,v \in V_G,  \Gamma(G;\mW^\star_G)_v = \Gamma(G;\mW^\star_G)_u \text{ iff } u \text{, } v \text{ are isomorphic}, G \in \sG\},
\end{equation*}
where $\sG$ is the set of all any-size attributed graphs, $V_G$ is the set of nodes in graph $G$.
We call $\cG(\Gamma)$ the \textit{identifiable set} of graph representation $\Gamma$. 

The {\em most expressive} graph representation $\Gamma^\star$ has an {\em identifiable set} of all any-size attributed graphs, i.e. $\cG(\Gamma^\star) = \sG$. We refer to the WL-GNN that is equally expressive as WL test as the {\em optimal} WL-GNN (or $\text{WLGNN}^\star$), which is at least as expressive as all other WL-GNNs.

In this section we show that {\em collective learning} can boost the optimal $\text{WLGNN}^\star$, i.e., the identifiable set of $\text{WLGNN}^\star$ is a proper subset of collective learning over $\text{WLGNN}^\star$ (denoted $\text{\ourmodel}^\star$)
\[\cG(\text{WLGNN}^\star) \subsetneq 
\cG(\text{\ourmodel}^\star).\]
\begin{restatable}[\ourmodel{}$^\star$ expressive power]{theorem}{thmclpower}
\label{thm:cl-power}
Let $\text{WLGNN}^\star$ be an optimal WL-GNN.
Then, the collective learning representation of \Cref{eq:Z_labeled}, using $\text{WLGNN}^\star$ as the GNN component,  (denoted $\text{\ourmodel}^\star$) is strictly more expressive than this $\text{WLGNN}^\star$ representation model applied to the same tasks.
\end{restatable}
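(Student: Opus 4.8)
The statement asserts $\cG(\text{WLGNN}^\star) \subsetneq \cG(\text{\ourmodel}^\star)$, so the plan is to establish two facts: (i) the inclusion $\cG(\text{WLGNN}^\star) \subseteq \cG(\text{\ourmodel}^\star)$, and (ii) strictness, by exhibiting a single graph $G^\star$ whose non-isomorphic nodes $\text{\ourmodel}^\star$ can separate while $\text{WLGNN}^\star$ cannot. Throughout I work in the test-unlabeled scenario relevant to the identifiable set, where $\mM = {\bf 0}$ so that the only label channel fed to the GNN is the sampled prediction $\hat{\mY}$ of \Cref{eq:Z_labeled}, which plays the role of a random node feature.

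\emph{Inclusion.} First I would show that $\text{\ourmodel}^\star$ can emulate $\text{WLGNN}^\star$. For any $G \in \cG(\text{WLGNN}^\star)$ there are weights $\mW^\star_G$ realizing the defining ``iff'' in the identifiable-set definition. Choosing $\Theta$ in \Cref{eq:Z_labeled} so that the GNN ignores the appended label coordinates makes the integrand independent of $\hat{\mY}$; the expectation then collapses to the plain $\text{WLGNN}^\star(\mX,\mA)_v$ representation, and the same $\mW^\star_G$ certifies $G \in \cG(\text{\ourmodel}^\star)$. This yields the inclusion.

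\emph{Strictness.} For the witness I would take $G^\star = C_6 \sqcup C_3$ (a hexagon and a triangle as separate components, with identical or absent attributes). Every node has degree two with degree-two neighbors, so the stable WL coloring is uniform; hence a node $u$ in the hexagon and a node $v$ in the triangle receive the same color under \emph{every} $\text{WLGNN}^\star$, even though no automorphism of $G^\star$ maps one component to the other (the components are non-isomorphic). Thus $u,v$ are non-isomorphic but $\text{WLGNN}^\star$-equivalent, giving $G^\star \notin \cG(\text{WLGNN}^\star)$. I would then argue $G^\star \in \cG(\text{\ourmodel}^\star)$. Because the sampled labels are drawn i.i.d.\ across nodes from a symmetric distribution, the expectation in \Cref{eq:Z_labeled} is isomorphism-invariant, so isomorphic nodes still receive equal representations---the ``only if'' direction is automatic, exactly as in the relational-pooling symmetrization of \citet{murphy2019relational}. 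It remains to separate $u$ from $v$: the sampled labels act as symmetry-breaking identifiers, and on a label-augmented triangle two WL refinement rounds already reveal that a node's two neighbors are mutually adjacent (each appears in the other's neighbor-color multiset), whereas in the hexagon the two neighbors lie at distance two and never do. Choosing the readout $\mW$ of $\text{WLGNN}^\star$ to output this triangle-closure indicator makes the per-sample value identically $1$ for $v$ and $0$ for $u$ on every labeling with distinct relevant labels; taking $C$ large enough (or conditioning on the distinct-label event and controlling collisions via the default readout) the two expected representations differ, so $\text{\ourmodel}^\star$ separates $u$ and $v$ and $G^\star \in \cG(\text{\ourmodel}^\star)$.

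\emph{Main obstacle.} The delicate point is not the construction but verifying that the \emph{expectation} over $\hat{\mY}$ in \Cref{eq:Z_labeled} does not average away the distinguishing signal---the same tension that makes \Cref{thm:col} run the other way for most-expressive representations. The plan is to neutralize this by selecting parameters for which the per-sample separator (triangle detection) is itself \emph{label-invariant}: since its value depends only on local cycle structure and not on which labels were drawn, the expectation inherits the separation rather than smearing it. Formalizing ``two WL rounds detect triangle closure once identifiers are present'' and bounding the contribution of label-collision events are the only genuinely technical pieces, and both are routine once one uses that $\text{WLGNN}^\star$ matches the WL test on the label-augmented graph.
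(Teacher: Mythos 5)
Your proposal is correct, and it shares the paper's overall skeleton (emulation inclusion plus a strictness witness) but differs genuinely in the key step. The inclusion argument is identical to the paper's: choose parameters so the GNN ignores the appended label channel, collapsing \Cref{eq:Z_labeled} to the plain $\text{WLGNN}^\star$. For strictness, however, the paper does not use regular graphs: it uses a custom partially-labeled graph (its Figure 1) in which two anchor nodes carry observed labels, all eight remaining nodes are WL-equivalent, and separation is obtained by a positive-probability ``one green among reds'' sample of $\hat{\mY}$ whose hop-distance pattern (green at hops $1$ and $2$ versus hops $1$ and $3$) is encoded uniquely by a most-expressive multiset aggregator, so the expectations in \Cref{eq:Z_labeled} differ. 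You instead take $C_6 \sqcup C_3$, the canonical WL failure case, and use the sampled labels purely as random identifiers enabling triangle detection; because your per-sample separator is label-invariant and identically $1$ on the triangle while its false-positive (collision) probability on the hexagon is strictly below $1$, the expectation provably preserves the gap. Your route buys simplicity and full independence from observed labels (it works in the test-unlabeled scenario with $\mM = {\bf 0}$, and connects cleanly to the random-feature expressiveness results of \citet{murphy2019relational}); the paper's route additionally exhibits the mechanism in the partially-labeled scenario that the framework actually targets, showing how observed anchor labels interact with sampled predictions. One small repair to your write-up: you cannot ``take $C$ large enough,'' since $C$ is fixed by the task; but your fallback is sound as stated---the softmax output puts strictly positive mass on every class, so the collision event on the hexagon has probability strictly less than one for any $C \geq 2$, and the expected indicator separates $1$ from something strictly smaller. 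Also, your ``i.i.d.'' claim for $\hat{\mY}$ holds on this particular graph only because the uniform WL coloring gives every node the same sampling distribution at the base of the recursion; in general one only gets orbit-wise identical, independent sampling, which is still exactly what the isomorphism-invariance of the expectation requires.
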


\Cref{thm:cl-power} answers \Cref{hypo: more_expressive}, by showing that by incorporating collective learning and sampling procedures, \ourmodel can boost the expressiveness of WL-GNNs, including the optimal $\text{WLGNN}^\star$. 

\begin{corollary} \label{coro:gnn}
Consider a graph representation learning method that, at iteration $t$, replaces $\hat{\mY}^{(t-1)}$, in \Cref{eq:yhat,eq:Z_labeled} with a deterministic function over $\mZ^{(t-1)}$, e.g., a softmax function that outputs $(P(\hat{\mY}^{(t-1)}_v| \mZ^{(t-1)}))_{v \in V^\text{(tr)}}$. 
Then, such method will be no more expressive than the optimal $\text{WLGNN}^\star$ and, hence, less expressive than $\text{\ourmodel}^\star$. 
\end{corollary}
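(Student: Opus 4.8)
The plan is to show that once the random draw of $\hat{\mY}^{(t-1)}$ is replaced by a deterministic map $g(\mZ^{(t-1)})$, the entire iterated procedure becomes a deterministic function of the input graph, and---crucially---one that is constant on the color classes of the WL test. Since $\text{WLGNN}^\star$ already distinguishes exactly those classes, the deterministic procedure cannot separate any additional pair of nodes, so its identifiable set is contained in $\cG(\text{WLGNN}^\star)$. Combining this with \Cref{thm:cl-power}, which gives $\cG(\text{WLGNN}^\star) \subsetneq \cG(\text{\ourmodel}^\star)$, yields the claimed strict gap.

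I would argue by induction on the iteration index $t$ that, for every $G \in \sG$, the representation $\mZ^{(t)}$ produced by the deterministic procedure satisfies $\mZ^{(t)}_u = \mZ^{(t)}_v$ whenever $u$ and $v$ share the same stable WL color $c(u) = c(v)$; equivalently, $\mZ^{(t)}$ factors through the WL coloring. For the base case, the recursion is initialized with $\hat{\mY}^{(-1)} = \mathbf{0}$, so $\mZ^{(0)}$ is precisely the optimal WL-GNN representation of the (fixed) input graph, whose distinguishing power coincides with the WL test. For the inductive step, assume $\mZ^{(t-1)}$ factors through $c$. Then the deterministic labels $\hat{\mY}^{(t-1)} = g(\mZ^{(t-1)})$ are themselves constant on each $c$-class, so forming $\mX^\text{(tr)}_{\mY_L^\text{(tr)}, \hat{\mY}^{(t-1)}, \mM^{(t)}}$ and feeding it to $\text{WLGNN}^\star$ in \Cref{eq:Z_labeled} (where the expectation is now vacuous) amounts to running the optimal WL-GNN on an augmented graph whose appended node features are functions of the original stable coloring.

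The main obstacle, and the technical heart of the argument, is the color-refinement lemma: appending node features that are constant on the classes of the stable WL coloring $c$ does not refine that coloring. I would establish equality of the two colorings by sandwiching. On one side, the augmented graph carries at least as much information as the original, so its stable coloring refines $c$ (is finer than or equal to it). On the other side, $c$ is itself stable on the augmented graph---two nodes with equal $c$-color share the same original attributes, the same appended feature (because that feature is a function of $c$), and the same multiset of neighbor $c$-colors (because $c$ is stable on the original graph, which has identical edges)---and $c$ refines the augmented initial coloring, so the coarsest stable refinement computed by color refinement is no finer than $c$. The two containments force the augmented stable coloring to equal $c$, completing the inductive step and showing that $\mZ^{(t)}$ again factors through $c$.

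Finally I would collect the pieces. Since every $\mZ^{(t)}$ factors through the WL coloring, the deterministic method assigns identical labels to WL-indistinguishable nodes and therefore identifies no non-isomorphic nodes outside $\cG(\text{WLGNN}^\star)$; its identifiable set is contained in $\cG(\text{WLGNN}^\star)$. Invoking \Cref{thm:cl-power} then gives $\cG(\text{det}) \subseteq \cG(\text{WLGNN}^\star) \subsetneq \cG(\text{\ourmodel}^\star)$, establishing that the deterministic variant is no more expressive than $\text{WLGNN}^\star$ and strictly less expressive than $\text{\ourmodel}^\star$. It is worth remarking where Monte Carlo sampling is essential: the averaging over random $\hat{\mY}^{(t-1)}$ in \Cref{eq:Z_labeled} lets two WL-equivalent nodes receive different sampled labels and thereby break the symmetry that the deterministic map is forced to preserve, which is exactly the mechanism that fails here.
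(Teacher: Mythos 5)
Your proof is correct, and it takes a more self-contained route than the paper does: the paper never gives a standalone proof of this corollary, treating it instead as falling out of the proof of \Cref{thm:cl-power}, where the only mechanism separating $\text{\ourmodel}^\star$ from $\text{WLGNN}^\star$ is that Monte Carlo sampling realizes, with non-zero probability, label configurations that are \emph{asymmetric} across WL-equivalent nodes; replace sampling by a deterministic map and WL-equivalent nodes necessarily receive identical predicted labels, so that symmetry can never be broken. Your argument formalizes exactly this intuition, and the technical content you add --- induction on $t$ together with the color-refinement lemma that appending node features measurable with respect to the stable coloring $c$ leaves the coarsest equitable partition unchanged, proved by your two-sided sandwich (monotonicity of refinement in the initial coloring, plus the fact that $c$ is itself a stable partition refining the augmented initial coloring) --- is sound; it is precisely the fixed-point property the paper gestures at but never states, and it yields the containment $\cG(\mathrm{det}) \subseteq \cG(\text{WLGNN}^\star)$ for every graph and every number of iterations, with strictness then imported from \Cref{thm:cl-power}. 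What the paper's implicit route buys is brevity; what yours buys is an actual proof, valid for arbitrary graphs and iteration counts, plus a crisp identification of the invariant that sampling is needed to destroy. Two details worth a sentence each in a final write-up: first, the base coloring $c$ in your induction should be the stable coloring of the graph attributed with $\mX^\text{(tr)}$, the masked observed labels $\mY^\text{(tr)}_L \odot \mM^{(t)}$, and the mask indicator itself, so that the label channel fed to the GNN at iteration $t$ (true labels where unmasked, predicted labels where masked) is genuinely a function of $c(v)$; second, your claim that $g(\mZ^{(t-1)})$ is constant on $c$-classes uses that $g$ acts nodewise (as in the corollary's softmax example) or is at least permutation-equivariant, since an arbitrary deterministic function of the whole matrix $\mZ^{(t-1)}$ could break ties by exploiting node identities. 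Neither point affects the validity of your conclusion.
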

\Cref{coro:gnn} shows that existing graph representation methods that ---on the surface--- may even look like \ourmodel, but do not perform the crucial step of sampling $(\hat{\mY}_v^{(t-1)})_{v \in V^\text{(tr)}}$, unfortunately, are no more expressive than  WL-GNNs. Examples of such methods include~\cite{fan2019recurrent,moore2017deep,qu2019gmnn,vijayan2018hopf}.

Next, we see that the practical benefits of collective learning are even greater when the WL-GNN has limited expressive power due to a constraint on the number of message-passing layers.

\subsection{How \ourmodel further expands the power of few-layer WL-GNNs}
A $d$-layer ($d > 1$) WL-GNN will only aggregate neighborhood information within $d$ hops of any given node (i.e., over a $d$-hop egonet, defined as the graph representing the connections among all nodes that are at most $d$ hops away from the center node).
In practice ---mostly for computational reasons--- WL-GNNs have many fewer layers than the graph's diameter $D$, i.e., $d < D$. %
%
For instance, GCN \cite{kipf2016semi}  and GraphSAGE \cite{hamilton2017inductive} both used $d = 2$ in their experiments.
Hence, they cannot differentiate two non-isomorphic nodes that are isomorphic within their $d$-hop neighborhood. 
We now show that \ourmodel can gather $2d$-hop neighborhood information with a $d$-layer WL-GNN. 

\begin{restatable}{proposition}{propexpandpower}
\label{prop:expand-power}
Let $G_{v}^d$ be the $d$-hop egonet of a node $v$ in graph $G$ with diameter $D > d$.
Let $v_1$ and $v_2$ be two non-isomorphic nodes whose $d$-hop egonets are isomorphic (i.e., $G_{v_1}^d$ is isomorphic to $G_{v_2}^d$) but $2d$-hop egonets are not isomorphic.
Then, a WL-GNN representation with $d$ layers will generate identical representations for $v_1$ and $v_2$ while \ourmodel is capable of giving distinct node representations.
\end{restatable}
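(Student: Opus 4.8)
The plan is to prove the two halves separately, first that any $d$-layer WL-GNN collapses $v_1$ and $v_2$, and then that \ourmodel, through one round of label feedback, can separate them. Throughout I adopt the paper's convention (stated before \Cref{thm:cl-power}) that two nodes are ``isomorphic'' exactly when the WL test assigns them the same color; writing $c_v^{(k)}$ for the color of $v$ after $k$ WL refinement rounds, the hypotheses read $c_{v_1}^{(d)} = c_{v_2}^{(d)}$ and $c_{v_1}^{(2d)} \neq c_{v_2}^{(2d)}$. For the first half I would invoke the standard receptive-field fact: the output of a $d$-layer message-passing GNN at $v$ is a function of the rooted $d$-hop egonet $G_v^d$ only, and is no more discriminative than $c_v^{(d)}$. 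Since $G_{v_1}^d$ is isomorphic to $G_{v_2}^d$ (equivalently $c_{v_1}^{(d)}=c_{v_2}^{(d)}$), the two representations coincide for every choice of parameters, so the WL-GNN cannot distinguish the nodes.

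For the second half I would work in \emph{scenario test-unlabeled} (set $\mM^{(t)}={\bf 0}$, so every label input comes from sampled predictions) and track the receptive field of \Cref{eq:Z_labeled} across one iteration. Starting the recursion from $\hat{\mY}^{(-1)}={\bf 0}$, the base representation $\mZ_u^{(0)}$ depends only on $G_u^d$, hence so does each sampled label $\hat{\mY}_u^{(0)}$ drawn via \Cref{eq:yhat}. At the next iteration $\mZ_v^{(1)}$ aggregates, over $u\in G_v^d$, inputs that now carry $\hat{\mY}_u^{(0)}$; because each such label depends on $G_u^d \subseteq G_v^{2d}$, the representation $\mZ_v^{(1)}$ becomes a function of the whole $2d$-hop egonet $G_v^{2d}$. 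Thus one collective round doubles the receptive field from $d$ to $2d$.

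It remains to turn ``depends on $G_v^{2d}$'' into actual separation, and here I would take the component GNN to be an injective (optimal) $d$-layer WL-GNN. I would choose the readout $\mW,\vb$ so that the predicted-label probability vector $\sigma(\mW\mZ_u^{(0)}+\vb)$ is injective in $c_u^{(d)}$ across the finitely many colors present in $G$ --- possible because the probability simplex is a continuum even for small $C$. Then the label-augmented features injectively encode the $d$-round colors, and running the injective $d$-layer GNN a second time realizes $d$ further WL rounds; by the WL composition property (the color $c_v^{(2d)}$ is obtained by refining the colors $c^{(d)}$ for $d$ more rounds) the iteration-$1$ representation determines $c_v^{(2d)}$. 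Since $c_{v_1}^{(2d)}\neq c_{v_2}^{(2d)}$, there exist parameters making $\mZ_{v_1}^{(1)}\neq\mZ_{v_2}^{(1)}$, so \ourmodel is capable of producing distinct representations.

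The main obstacle I anticipate is the last step's mismatch between the \emph{sampled one-hot} labels actually injected and the probability vector I want the second pass to read: the representation in \Cref{eq:Z_labeled} is an \emph{expectation} over $\hat{\mY}^{(0)}$, so I must argue this expectation still transmits the $c_u^{(d)}$ information. For mean or sum aggregation this is immediate, since $\expectation[\hat{\mY}_u^{(0)}]$ recovers the encoding already at the first layer; for general architectures I would instead argue existence of parameters under which the expected representation is an injective function of $c_v^{(2d)}$. I would also note that, unlike \Cref{thm:cl-power}, the gain here comes solely from the $d\to 2d$ receptive-field extension rather than from randomness per se, which is consistent with \Cref{coro:gnn}, where a deterministic variant is still bounded by the \emph{stable} WL coloring, of which $c_v^{(2d)}$ is merely a part.
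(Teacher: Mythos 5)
Your route is genuinely different from the paper's. The paper proves \Cref{prop:expand-power} by a single concrete instance (the path graph of \Cref{fig:line_graph}): a $2$-layer WL-GNN cannot split nodes $1$ and $2$, but nodes $5$ and $6$ --- which lie within $d$ hops of $1$ and $2$ and whose \emph{own} $d$-hop neighborhoods are distinguishable --- acquire different predicted-label distributions (anchored by their observed labels $y_5,y_6$), so the sampled labels $\hat{\mY}_5^{(0)}\neq\hat{\mY}_6^{(0)}$ fed back through \Cref{eq:Z_labeled} break the tie. Your argument abstracts exactly this mechanism: your computation that $\mZ_v^{(1)}$ is a function of $G_v^{2d}$, because each injected label $\hat{\mY}_u^{(0)}$ depends only on $G_u^d\subseteq G_v^{2d}$, is correct, and it is more general than what the paper writes (the paper's example-based proof does not actually establish the universal statement). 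Your closing observation is also apt: for this proposition, unlike \Cref{thm:cl-power}, the gain comes from feedback-induced radius doubling rather than from sampling, consistent with \Cref{coro:gnn}.

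However, the step that turns ``$\mZ_v^{(1)}$ depends on $G_v^{2d}$'' into ``$\mZ_{v_1}^{(1)}\neq\mZ_{v_2}^{(1)}$'' has a genuine gap, and it is precisely the step you flagged. Your claim that for mean or sum aggregation the problem is ``immediate, since $\E[\hat{\mY}_u^{(0)}]$ recovers the encoding already at the first layer'' is false: \Cref{eq:Z_labeled} is the expectation of a \emph{nonlinear} function of the joint sample, and $\E[\mathrm{GNN}(\hat{\mY})]\neq\mathrm{GNN}(\E[\hat{\mY}])$ --- the nonlinearities block commutation at every layer regardless of the aggregator, so the sampled one-hots never get replaced by their means inside the network. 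For general architectures you only restate the goal (``argue existence of parameters\ldots''). A correct closure exists but needs an argument of the following shape: since the labels are one-hot and sampled independently across nodes given the step-$0$ representations, $\mZ_v^{(1)}$ is determined by the \emph{distribution} of the labeled depth-$d$ computation tree at $v$; choosing the component GNN to encode labeled trees injectively and to map distinct trees to linearly independent vectors makes $\mZ_v^{(1)}$ equal to that distribution, and the two distributions differ because their position-wise marginals recover the $c^{(d)}$-decoration of the tree, which by WL composition determines $c_v^{(2d)}$.

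Note also that your translation of the hypotheses into WL colors is a strengthening, not an equivalence: $G_{v_1}^{2d}\not\cong G_{v_2}^{2d}$ does \emph{not} imply $c_{v_1}^{(2d)}\neq c_{v_2}^{(2d)}$, since WL is incomplete (e.g., non-isomorphic regular $2d$-neighborhoods of equal degree and size receive identical colors, in which case your argument --- and any argument that feeds back functions of step-$0$ representations --- yields identical expectations at $v_1$ and $v_2$). So what you prove is the proposition under the reading ``$2d$-neighborhoods WL-distinguishable.'' Given that the paper itself defines node isomorphism via WL colors and its example satisfies the stronger condition, this reading is defensible, but it should be stated explicitly rather than presented as an equivalent reformulation of the hypothesis.
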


\Cref{prop:expand-power} shows that collective learning has yet another benefit: \ourmodel further boosts the power of WL-GNNs with limited message-passing layers by gathering neighborhood information within a larger radius. 
Specifically, \ourmodel built on a WL-GNN with $d$ layers can enlarge the effective neighborhood radius from $d$ to $2d$ in \Cref{eq:Z_labeled} , while WL-GNN would have to stack $2d$ layers to achieve the same neighborhood radius, which in practice may cause optimization challenges (i.e., $d=2$ is a common hyperparameter value in the literature).


\vspace{-3pt}
\subsection{Optimization of \ourmodel}
\vspace{-5pt}
\begin{restatable}{proposition}{propopt}
\label{prop:opt}
If $\forall v \in V_L^\text{(tr)}$, $\nabla_{\Theta} ({\bf W}\mZ_v^{(t)}(\mM^{(t)};\Theta))_{y^\text{(tr)}_v}$ 
is bounded (e.g., via gradient clipping), then the optimization in \Cref{eq:mcgnn_obj_mask}, with the unbiased sampling of $\{\mZ^{(t-1)}_v\}_{v \in V^\text{(tr)}}$ and $\mM^{(t)}$ described above, results in a Robbins-Monro~\citep{robbins1951} stochastic optimization algorithm that optimizes a surrogate upper bound of the loss in \Cref{eq:mcgnn_obj_mask}.
\end{restatable}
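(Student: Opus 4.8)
The plan is to exhibit an explicit surrogate objective $\tilde{L}$ that upper bounds the loss $L$ of \Cref{eq:mcgnn_obj_mask}, and to show that the sampling scheme of Step 3 performs unbiased stochastic gradient descent on $\tilde{L}$, so that the iterates constitute a Robbins--Monro stochastic approximation. Write the per-step loss as $L(\Theta,{\bf W},{\bf b}) = \E_{\mM^{(t)}}\big[-\sum_{v\in V_L^\text{(tr)}} \overline{\mM}^{(t)}_v \log\sigma({\bf W}\mZ_v^{(t)}(\mM^{(t)};\Theta)+{\bf b})_{y^\text{(tr)}_v}\big]$, and recall from \Cref{eq:Z_labeled} that $\mZ_v^{(t)}$ is itself an inner expectation $\E_{\hat{\mY}^{(t-1)}}[\text{GNN}(\cdot;\Theta)_v]$. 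The obstruction to naive sampling is that the nonlinear $\log\sigma$ sits outside this inner expectation, so replacing $\mZ_v^{(t)}$ by a Monte Carlo average yields a \emph{biased} estimate of $L$ and of $\nabla L$. The surrogate is obtained by moving the log-softmax inside the inner expectation over $\hat{\mY}^{(t-1)}$:
\begin{equation*}
\tilde{L}(\Theta,{\bf W},{\bf b}) = \E_{\mM^{(t)}}\E_{\hat{\mY}^{(t-1)}}\Big[-\sum_{v\in V_L^\text{(tr)}} \overline{\mM}^{(t)}_v \log\sigma({\bf W}\,\text{GNN}(\mX^\text{(tr)}_{\cdots};\Theta)_v+{\bf b})_{y^\text{(tr)}_v}\Big].
\end{equation*}

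Next I would establish $L \le \tilde{L}$ via Jensen's inequality. The negative log-softmax $z\mapsto -\log\sigma(z)_{c} = -z_c + \log\sum_j e^{z_j}$ is convex in its logit argument, being the sum of a linear term and the (convex) log-sum-exp function; composing with the affine map $\mZ_v \mapsto {\bf W}\mZ_v+{\bf b}$ preserves convexity, so each per-node term is convex in $\mZ_v$. Applying Jensen to the inner expectation then gives, term by term, $-\log\sigma({\bf W}\E_{\hat{\mY}}[\text{GNN}]+{\bf b})_{y_v} \le \E_{\hat{\mY}}[-\log\sigma({\bf W}\,\text{GNN}+{\bf b})_{y_v}]$; summing over $v\in V_L^\text{(tr)}$ and taking $\E_{\mM^{(t)}}$ yields $L \le \tilde{L}$, so $\tilde{L}$ is a genuine surrogate upper bound.

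Then I would verify that Step 3 delivers an unbiased gradient of $\tilde{L}$. Because in $\tilde{L}$ both expectations now sit \emph{outside} the log-softmax, the estimator that draws $\mM^{(t)}\sim\text{Uniform}(\cM)$ and $K$ i.i.d.\ samples $\hat{\mY}^{(t-1)}$ from the time-$(t-1)$ model (per \Cref{eq:yhat}) and averages the per-sample gradients is, by linearity of expectation, an unbiased estimate of $\nabla_{\Theta,{\bf b}}\tilde{L}$ --- precisely the biased situation for $\nabla L$ being avoided. The boundedness hypothesis, realized by gradient clipping and hence applying to the per-sample logit gradients $\nabla_\Theta({\bf W}\,\text{GNN}(\cdot;\Theta)_v)_{y_v}$, plays a dual role: it justifies interchanging gradient and expectation (dominated convergence), and, together with the always-bounded softmax gradient, ensures the stochastic gradient has uniformly bounded second moment.

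Finally I would invoke the Robbins--Monro theorem: with unbiased gradient estimates of $\tilde{L}$ and bounded variance, any step-size schedule satisfying $\sum_t\alpha_t=\infty$ and $\sum_t\alpha_t^2<\infty$ makes the iterates a stochastic approximation converging to a stationary point of $\tilde{L}$. The main obstacle is not any single inequality but the careful bookkeeping of the two nested sources of randomness: confirming that moving $\E_{\hat{\mY}}$ outside the nonlinearity is exactly what converts the intractable, biased gradient of $L$ into an unbiased gradient of the tractable surrogate $\tilde{L}$, and that the stated boundedness assumption is precisely the condition supplying the variance control that Robbins--Monro requires.
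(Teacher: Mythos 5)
Your proposal is correct and takes essentially the same route as the paper's proof: convexity of the negative log-softmax composed with the affine map ${\bf W}\mZ_v^{(t)} + {\bf b}$, Jensen's inequality applied to the inner expectation over $\hat{\mY}^{(t-1)}$ to obtain a surrogate upper bound, unbiasedness of the $\mM^{(t)}$ and $\hat{\mY}^{(t-1)}$ sampling, and the boundedness hypothesis to ensure the relevant expectations exist, yielding a Robbins--Monro scheme. Your write-up is in fact more explicit than the paper's (you name the surrogate $\tilde{L}$ and spell out why the sampled gradients are unbiased for it rather than for the original loss), but the underlying argument is identical.
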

%
%
Since the optimization objective in \Cref{eq:mcgnn_obj_mask} is computationally impractical, as it requires computing all possible binary masks and label predictions, \Cref{prop:opt} shows that the sampling procedures used in \ourmodel that considers $K$ samples of label predictions and a random mask at \break each gradient step is a feasible approach of estimating an unbiased upper bound of the objective.

\vspace{-2mm}
\section{Experiments}\label{sec:results}
\vspace{-1mm}


\subsection{Experiment Setup} \label{sec: exp}
{\bf Datasets.}
We use datasets of Cora, Pubmed, Friendster, Facebook, and Protein. The largest dataset (Friendster \cite{teixeira2019graph}) has 43,880 nodes, which is a social network of users where the node attributes include numerical features (e.g number of photos posted) and categorical features (e.g. gender, college, music interests, etc) encoded as binary one-hot features. The node labels represent one of the five age groups. Please refer to \Cref{appx:datasets} for more details.

{\bf Train/Test split.}
To conduct inductive learning tasks, for each dataset we split the graphs for training and testing, and the nodes are sampled to guarantee that there is no overlapping between any two sets of $ V_L^\text{(tr)}$, $ V_L^\text{(te)}$ and $V_T$. In our experiments, we tested two different label rates in test graph: $0$ (unlabeled) and $50\%$. We run five trials for all the experiments, and in each trial we randomly split the nodes/graphs for training/validation and testing. 

As our method can be applied to any GNN models, we tested three GNNs as examples:
\begin{itemize}[leftmargin=*]
\vspace{-2mm}
    \item GCN \cite{kipf2016semi} which includes two graph convolutional layers. Here we implemented an inductive variant of the original GCN model for our tasks. 
\vspace{-.5mm}
    \item Supervised GraphSage \cite{hamilton2017inductive} (denoted by GS) with Mean pooling aggregator. We use sample size of 5 for neighbor sampling.
\vspace{-.5mm}
    \item Truncated Krylov GCN \cite{luan2019break} (denoted by TK), a recent GNN model that leverages multi-scale information in different ways and are scalable in depth. The TK has stronger expressive power and achieved state-of-the-art performance on node classification tasks. We implemented Snowball architecture which achieved comparable performance with the other truncated Krylov architecture according to the original paper. 
\vspace{-2mm}
\end{itemize}

For each of GNNs, we compare its baseline performance (on its own) to the performance achieved using collective learning in \ourmodel. Note that we set the number of layers to be $2$ for GCN \cite{kipf2016semi} and GraphSage \cite{hamilton2017inductive} as set in their original papers, and use $10$ layers for TK \cite{luan2019break}. For a fair comparison, the baseline GNN and \ourmodel are trained with the same hyper-parameters, e.g. number of layers, hidden dimensions, learning rate, early-stopping procedures. Please refer to \Cref{appx:datasets} for details.

In addition, we also compare to three relational classifiers, ICA \cite{lu2003link}, PL-EM \cite{pfeiffer2015overcoming} and GMNN \cite{qu2019gmnn}. The first two models apply collective learning and inference with simple local classifiers —— Naive Bayes for PL-EM and Logistic regression for ICA. GMNN is the state-of-the-art collective model with GNNs, which uses two GCN models to model label dependency and node attribute dependency respectively. All the three models take true labels in their input, thus we use $\mY_L^\text{(tr)}$ for training and $\mY_L^\text{(te)}$ for testing. 

We report the average accuracy score and standard error of five trials for the baseline models, and compute the absolute improvement of accuracy of our method over the corresponding base GNN. We report the {\em balanced} accuracy scores on Friendster dataset as the labels are highly imbalanced. To evaluate the significance of our model's improvements, we performed a paired t-test with five trials.

\subsection{Results}
The node classification accuracy of all the models is shown in \Cref{tab:results_labeled}. Our proposed collective learning boost is denoted as +CL (for \textbf{C}ollective \textbf{L}earning) in the results and our model performance (absolute $\%$ of improvement over the corresponding baseline GNN) is shown in shaded area. Numbers in bold represent significant improvement over the baseline GNN based on a paired t-test ($p < 0.05$), and numbers with $^*$ is the best performing method in each column. 


{\bf Comparison with baseline GNN models.} \Cref{tab:results_labeled} shows that our method improves the corresponding non-collective GNN models for all the three model architectures (i.e. GCN, GraphSage and TK). Although all the models have large variances over multiple trials --- which is because different parts of the graphs are being trained and tested on in different trials, our model consistently improves the baseline GNN. The results from a paired t-test comparing the performance of our method and the corresponding non-collective GNN shows that the improvement is almost always significant (marked as bold), with only four exceptions. 
Comparing the gains on different datasets, our method achieved smaller gains on Friendster. This is because the Friendster graph is much more sparse than all other graphs (e.g. edge density of Friendster is 1.5e-4 and edge density of Cora is 1.44e-3 \cite{teixeira2019graph}), which makes it hard for any model to propagate label information and capture the dependency. 

Moreover, comparing the improvement over GCN and TK, we can observe that in general our method adds more gains to GCN performance. For example, with 3\% training labels on Cora, our method when combining with GCN has an average of 6.29\% improvement over GCN, 2.35\% improvement over GraphSage, and 0.96\% improvement over TK. This is in line with our assumption \Cref{hypo: more_expressive} that collective learning can help GNNs produce a more expressive representation. As GCN is provably less expressive than TK \cite{luan2019break}, there is a larger room to increase its expressiveness. 

Note the we use different trials for the two test label rates, the gains are generally larger when $50\%$ of the labels are available. For example, when combining with GCN, the improvements of our method are 6.29\% and 1.72\% for unlabeled Cora and Facebook test sets, but with partially-labeled test data, the improvements are 15.69\% and 2.95\% respectively. This shows the importance of modeling label dependency especially when the some test data labels are observed.

{\bf Comparison with other relational classifiers}
The two baseline non-GNN relational models, i.e. PL-EM and ICA generally perform worse than the three GNNs, with the only exception on Protein dataset. This could be because the two non-GNN models generally need a larger portion of labeled set to train the weak local classifier, whereas GNNs utilize a neural network architecture as "local classifier", which is better at representation learning by transforming and aggregating node attribute information. However, when the model is trained with a large training set (e.g. with 30\% nodes on Protein dataset), modeling the label dependency becomes crucial. At the same time, our method is still able to improve the performance of the corresponding GNNs.

For GMNN, the collective GNN model, it achieved better performance than its non-collective base model, i.e. GCN, and we can see that our model combining with GCN achieved comparable or slightly better performance than GMNN. When combing with other more powerful GNNs, our model can easily out-perform it, e.g. on Cora, Pubmed and Facebook datasets, the TK performs better than GMNN and our method adds extra gains over TK.

\begin{table*}[t!!]
\vspace{-2mm}
\centering
\resizebox{\textwidth}{!}{%
\begin{tabular}{llrrrrrrrrrr}
 & & \multicolumn{2}{c}{Cora}    & \multicolumn{2}{c}{Pubmed} & \multicolumn{2}{c}{Friendster} & \multicolumn{2}{c}{Facebook}& \multicolumn{2}{c}{Protein}\\ \cmidrule(lr){3-4}
 \cmidrule(lr){5-6} \cmidrule(lr){7-8}
 \cmidrule(lr){9-10}\cmidrule(lr){11-12}
\multicolumn{2}{r}{\bf \# train labels:} & \multicolumn{2}{c}{85 (3.21\%)}    
& \multicolumn{2}{c}{300 (1.52\%)} 
& \multicolumn{2}{c}{641 (1.47\%)} 
& \multicolumn{2}{c}{80 (1.76\%)}
& \multicolumn{2}{c}{7607 (30\%)} \\ 
\cmidrule(lr){3-4}
 \cmidrule(lr){5-6} \cmidrule(lr){7-8}
 \cmidrule(lr){9-10}\cmidrule(lr){11-12}
\multicolumn{2}{r}{\bf \% labels in $G^\text{(te)}$:}  &  \multicolumn{1}{c}{0\%} & \multicolumn{1}{c}{50\%} & \multicolumn{1}{c}{0\%} & \multicolumn{1}{c}{50\%} & \multicolumn{1}{c}{0\%}  &\multicolumn{1}{c}{50\%} & \multicolumn{1}{c}{0\%}           & \multicolumn{1}{c}{50\%}  & \multicolumn{1}{c}{0\%}  & \multicolumn{1}{c}{50\%}   \\
\cmidrule(lr){3-4}
 \cmidrule(lr){5-6} \cmidrule(lr){7-8}
 \cmidrule(lr){9-10} \cmidrule(lr){11-12}
Random & & 14.28 (0.00) & 14.28 (0.00)& 33.33 (0.00)& 33.33 (0.00)& 20.00 (0.00)& 20.00 (0.00) & 50.00 (0.00)& 50.00 (0.00)& 50.00 (0.00) & 50.00 (0.00)\\ 
\cmidrule{1-2}
\multirow{2}{*}{GCN$^\text{\cite{kipf2016semi}}$} &  -    &  45.90 (3.26)  & 36.38 (1.35) & 52.68 (2.36) & 54.11 (4.86) & 29.34 (0.55) & 28.44 (0.56)  &65.85 (1.01) & 63.13 (2.12) & 75.86 (1.11) & 77.54 (1.09)\\
&\plusours  &  \cellcolor{LightCyan} \textbf{+6.29 (1.49)}  &  \cellcolor{LightCyan}  \textbf{+15.69 (3.20)}  &  \cellcolor{LightCyan} +4.48 (2.33) &  \cellcolor{LightCyan} \textbf{+5.62 (1.17)} &  \cellcolor{LightCyan} \textbf{+0.81 (0.10)}$^*$ &  \cellcolor{LightCyan} {\bf +0.90 (0.32)} &  \cellcolor{LightCyan} \textbf{+1.72 (0.48)} & \cellcolor{LightCyan}  \textbf{+2.95 (0.84)}  &  \cellcolor{LightCyan} \textbf{+1.22 (0.51)} &  \cellcolor{LightCyan} \textbf{+0.75 (0.33)}\\
\cmidrule{1-2}
\multirow{2}{*}{GS$^\text{\cite{hamilton2017inductive}}$} & -      &  50.69 (1.50) & 48.42 (2.82) &59.34 (3.47) & 58.52 (5.42) & 28.10 (0.59) & 28.10 (0.48) & 64.56 (0.92) & 62.99 (0.88) & 73.85 (1.12) &73.01 (2.28)\\
 & \plusours  & \cellcolor{LightCyan}  \textbf{+2.35 (0.56)} &  \cellcolor{LightCyan}  \textbf{+4.52 (0.84)}  & \cellcolor{LightCyan} \textbf{+1.48 (0.41)} &  \cellcolor{LightCyan} \textbf{+2.42 (0.27)} &  \cellcolor{LightCyan} +0.31 (0.15)  &  \cellcolor{LightCyan}  +0.73 (0.23)&  \cellcolor{LightCyan} \textbf{+2.38 (0.77)} &  \cellcolor{LightCyan} \textbf{+2.05 (0.04)} &  \cellcolor{LightCyan} \textbf{+0.84 (0.12)} &  \cellcolor{LightCyan}  +1.47 (0.63)\\ 
 \cmidrule{1-2}
\multirow{2}{*}{TK$^\text{\cite{luan2019break}}$} & - &  63.74 (2.61) & 55.68 (2.08) & 61.13 (5.03) &  63.05 (5.15) & 28.89 (0.10) & 29.30 (0.15) & 67.63 (1.03) & 65.80 (1.16) & 73.65 (1.69) & 78.94 (1.50)\\
 & \plusours &  \cellcolor{LightCyan}  \textbf{+0.96 (0.30)}$^*$ & \cellcolor{LightCyan}  {\bf +7.18 (1.88)}$^*$ &    \cellcolor{LightCyan} \textbf{+1.00 (0.21)}$^*$  &  \cellcolor{LightCyan} {\bf +1.91 (0.75)}$^*$  &  \cellcolor{LightCyan} \textbf{+0.55 (0.17)}  & \cellcolor{LightCyan}  \textbf{+0.45 (0.08)}$^*$ & \cellcolor{LightCyan}  \textbf{+0.63 (0.26)}$^*$ &  \cellcolor{LightCyan} {\bf +2.37 (0.80)}$^*$ &  \cellcolor{LightCyan}  \textbf{+1.31(0.27)}$^*$ & \cellcolor{LightCyan} \textbf{+1.36(0.94)}\\
\cmidrule{1-2}
PL-EM$^\text{\cite{pfeiffer2015overcoming}}$& - & 20.70 (0.05) & 20.35 (0.05) & 38.05 (4.85)& 31.70 (4.78) & 23.26 (0.01) & 26.30 (0.25) & 56.17 (7.42) & 54.56 (6.17) &78.46 (1.45) & 77.95 (1.56)\\
ICA$^\text{\cite{lu2003link}}$& - &  26.20 (0.51)& 31.17 (3.66) & 44.40 (1.92)& 33.38 (4.69) & 25.14 (0.03) & 25.08 (0.17) & 47.93 (6.04) & 59.39 (3.69) & 84.88 (3.35)$^*$ & 84.39 (4.08)$^*$\\
GMNN$^\text{\cite{qu2019gmnn}}$& -&  49.05 (1.86)& 49.36 (2.22) & 58.03 (3.26)& 62.16 (4.40) & 22.20 (0.07) & 28.53 (0.64) & 65.82 (1.30) & 63.45 (2.15) & 76.75 (0.74) & 75.96 (0.76)\\
 \bottomrule
\end{tabular}
}
\caption{Node classification accuracy with unlabeled and partially-labeled test data. Numbers in bold represent significant improvement in a paired t-test at the $p < 0.05$ level, and numbers with $^*$ represent the best performing method in each column.}
\label{tab:results_labeled}
\vspace{-2mm}
\end{table*}

We did two ablation studies to investigate the usage of predicted labels (detailed in \Cref{sec: ablation}), which showed that (1) adding predicted labels in model input had extra value comparing to using true labels only, and (2) the gain of our framework is from using samples of the predicted labels rather than random one-hot vectors.

{\bf Runtime analysis.} \ourmodel computes $K$ embeddings at each stochastic gradient step, therefore 
overall, per-gradient step, \ourmodel is $K$ slower than its component WL-GNN. Overall, after $T$ iterations of Steps 1-3, \ourmodel total runtime increases by $T \times K$ over the original runtime of its component WL-GNN.
%
Our experiments are conducted on a single Nvidia GTX 1080Ti with $11$ GB of shared memory. \ourmodel built on GCN on the largest dataset (i.e. Friendster with 43K nodes) takes $66.31$ minutes for training and inference (the longest time), with a total of $T=10$ iterations, while the corresponding GCN takes only $1.04$ minutes for the same operations. In the same dataset, \ourmodel built on TK takes $132.33$ minutes for training and inference, while the corresponding TK takes $1.93$ minutes.
We give a detailed account of running times on smaller graphs in \Cref{sec: append_runtime}.
We note that we spent nearly no time engineering \ourmodel for speed or for improving our results. 
Our interest in this paper lies entirely on the gains of a direct application of \ourmodel. 
We fully expect that further engineering advances can significantly reduce the performance penalty and increase accuracy gains.
For instance, parallelism can significantly reduce the time to collect $K$ samples in \ourmodel.

\vspace{-2mm}
\section{Related work}
\vspace{-2mm}

{\bf On collective learning and neural networks.}
There has been work on applying deep learning to collective classification. For example, \citet{moore2017deep} proposed to use LSTM-based RNNs for classification tasks on graphs. They transform each node and its set of neighbors into an unordered sequence and use an RNN to predict the class label as the output of that sequence. \citet{pham2017column} designed a deep learning model for collective classification in multi-relational domains, which learns local and relational features simultaneously to encodes multi-relations between any two instances. 

The closest work to ours is \citet{fan2019recurrent}, which proposed a recurrent collective classification (RCC) framework, a variant of ICA \cite{lu2003link} including dynamic relational features encoding label information. Unlike our framework, this method does not sample labels $\hat{\mY}$, opting for an end-to-end training procedure.
\citet{vijayan2018hopf} opts for a similar no-sample RCC end-to-end training method as \cite{fan2019recurrent}, now combining a differentiable graph kernel with an iterative stage.
Graph Markov Neural Network (GMNN) \cite{qu2019gmnn} is another promising approach that applies statistical relational learning to GNNs.
GMNNs model the joint label distribution with a conditional random field trained with the variational EM algorithm. 
GMNNs are trained by alternating between an E-step and an M-step, and two WL-GCNs are trained for the two steps respectively.
These studies represent different ideas for bringing the power of collective classification to neural networks.
Unfortunately, \Cref{coro:gnn} shows that, without sampling $\hat{\mY}$, the above methods are still WL-GNNs, and hence, their use of collective classification fails to deliver any increase in expressiveness beyond an optimal WL-GNN (e.g.,~\citet{xu2018powerful}).

In parallel to our work, \citet{jia2020outcome} considers regression tasks by modeling the joint GNN residual of a target set ($y - \hat{y}$) as a multivariate Gaussian, defining the loss function as the marginal likelihood only over labeled nodes $\hat{y}_L$.  In contrast, by using the more general foundation of collective classification, our framework can seamlessly model both classification and regression tasks, and include model predictions over the entire graph $\hat{\mY}$ as \ourmodel{}'s input, thus affecting both the model prediction and the GNN training in inductive node classification tasks. 


{\bf On self-supervised learning and semi-supervised learning.}
Self-supervised learning is closely related to semi-supervised learning.
In fact, self-supervised learning can be seen as a self-imposed semi-supervised learning task, where part of the input is masked (or transformed) and must be predicted back by the model~\citep{doersch2015unsupervised,noroozi2016unsupervised,lee2017unsupervised,misra2016shuffle}.
Recently, self-supervised learning has been broadly applied to achieve state-of-the-art accuracy in computer vision~\citep{henaff2019data,gidaris2019boosting} and natural language processing~\cite{devlin2018bert} supervised learning tasks. 
The use of self-supervised learning in graph representation learning is intimately related to the use of pseudolikelihood to approximate true likelihood functions.

{\bf Collective classification for semi-supervised learning tasks.}
Conventional relational machine learning (RML) developed methods to learn joint models from labeled graphs \citep{lu2003link,neville2000iterative}, and applied the learned classifier to {\em jointly} infer the labels for unseen examples with {\em collective inference}.
When the goal is to learn and predict within a \textit{partially-labeled} graph, RML methods have considered semi-supervised formulations \citep{koller2007introduction, xiang2008pseudolikelihood, pfeiffer2015overcoming} to model the joint probability distribution:
\[ 
P (\mY_U|\mY_L,\mX, \mA).
\]
In this case RML methods use both {\em collective learning} and {\em collective inference} procedures for semi-supervised learning. 

RML methods typically consider {\em a Markov assumption} to simplify the above expressions ---every node $v_i \in V$ is considered conditionally independent of the rest of the network given its Markov Blanket ($\mathcal{MB}(v_i)$). For undirected graphs, this is often simply the set of the immediate neighbors of $v_i$. Given the Markov blanket assumption, RML methods typically use a local conditional model (e.g., relational Naive Bayes \citep{neville2003simple}, relational logistic regression \citep{popescul2002towards}) to learn and infer labels within the network.

Semi-supervised RML methods utilizes the {\em unlabeled data} to make better predictions within the network. Given the estimated values of unlabeled examples, i.e.$P(\mY_U^\text{(tr)}|\mY_L^\text{(tr)}, \mX^\text{(tr)}, \mA^\text{(tr)})$, the local model parameters can be learned by maximizing the {\em pseudolikelihood} of the labeled part:

\begin{equation} \label{eq:RML1}
O = \sum_{v \in V^\text{(tr)}}{\log P(y_v^\text{(tr)} | \mY^\text{(tr)}_{\mathcal{MB}(v)}, \mX^\text{(tr)}, \mA^\text{(tr)})}.
\end{equation}

The {\em key} difference between \Cref{eq:RML1} and the GNNs objective in \Cref{equ:gnn_obj}: the RML model is always conditioned on the labels (either true labels $\mY^\text{(tr)}_L$ or estimated labels $\hat{\mY}^\text{(tr)}_U$) even when there are no observed labels in the test data, i.e., even when $\mY_L^\text{(te)} = \emptyset$.

The most common form of semi-supervised RML utilizes expectation maximization (EM)~\citep{xiang2008pseudolikelihood, pfeiffer2015overcoming}, which iteratively relearn the parameters given the expected values of the unlabeled examples. For instance, the PL-EM algorithm \cite{pfeiffer2015overcoming} optimizes the pseudolikelihood over the entire graph:
\begin{align*}
&\textbf{E-Step: } \text{evaluate}~ P(\mY_U^\text{(tr)}|\mY_L^\text{(tr)}, \mX^\text{(tr)}, \mA^\text{(tr)}, \Theta_{t-1})\\
&\textbf{M-Step: } \text{learn}~ \Theta_{t}:\\
& \Theta_{t} = \argmax_{\Theta}\sum_{\mY_U^\text{(tr)}\in \Gamma_U}{P(\mY_U^\text{(tr)}|\mY_L^\text{(tr)}, \mX^\text{(tr)}, \mA^\text{(tr)}, \Theta}_{t-1}) 
  \sum_{v \in V^\text{(tr)}}{\log P(y_v^\text{(tr)}|\mY_{\mathcal{MB}(v)}^\text{(tr)}, \mX^\text{(tr)}, \mA^\text{(tr)}, \Theta)}.
\end{align*}

In comparison to semi-supervised RML, our proposed framework \ourmodel performs collective learning to strengthen GNN, which is a more powerful and flexible "local" classifier compared to the typical weak local classifier used in RML methods (e.g. relational Naive Bayes). For parameter learning (M-step), \ourmodel also optimizes pseudolikelihood, but incorporates Monte Carlo sampling from label prediction $\hat{\mY}^{(t-1)}$ instead of directly using the predicted probability of unlabeled nodes $P(\mY_U^\text{(tr)}|\mY_L^\text{(tr)}, \mX^\text{(tr)}, \mA^\text{(tr)}, \Theta_{t-1}).$

{\em Collective inference.} When the model is applied to make predictions on unlabeled nodes (in the E-step), {\em joint (i.e., collective) inference} methods such as variational inference or Gibbs sampling must be applied in order to use the conditionals from \Cref{eq:RML1}. This combines the local conditional probabilities with global inference to estimate the joint distribution over the unlabeled vertices, e.g.,:
\[P(\mY_U|\mY_L, \mX) \approx Q(\mY_U) = \Pi_{v_i \in V_U}{Q_i(y)},\]
where each component $Q_i(y)$ is iteratively updated.

Alternatively, a Gibbs sampler iteratively draws a label from the corresponding conditional distributions of the unlabeled vertices:
\[ \hat{y}_v \sim P(y_v | \hat{Y}_{\mathcal{MB}(v)}, \mY_L,  \mX, A), \; \forall v \in V. \]
In Gibbs sampling, it is the sampling of labels that allow us to sample from the joint distribution, which is what enriches the simple models often used in collective inference.



\vspace{-2mm}
\section{Conclusion}
\vspace{-2mm}
In this work, we answer the question ``can collective learning and sampling techniques improve the expressiveness of state-of-the-art GNNs in inductive node classification tasks?'' 
We first show that with the most expressive GNNs there is no need to do collective learning; however, since we do not have the most expressive models, we present the collective learning framework (exemplified in \ourmodel), that can be combined with any existing GNNs to provably improve WL-GNN expressiveness. We considered the inductive node classification tasks across graphs, and showed by extensive empirical study that our collective learning significantly boosts GNNs performance on all the tasks.

\section{Potential biases}
This work presents an algorithm for node classification that works on arbitrary graphs, including real-world social networks, citation networks, etc..
Any classification algorithm that learns from data runs the risk of producing biased predictions reflective of the training data --- our work, which learns an inductive node classifier from training graph examples, is no exception. However, our main focus in this paper is on introducing a general algorithmic framework that can increase the expressivity of existing graph representation learning methods, rather than particular real-world applications.


\bibliography{reference}

\begin{thebibliography}{36}
\providecommand{\natexlab}[1]{#1}
\providecommand{\url}[1]{\texttt{#1}}
\expandafter\ifx\csname urlstyle\endcsname\relax
  \providecommand{\doi}[1]{doi: #1}\else
  \providecommand{\doi}{doi: \begingroup \urlstyle{rm}\Url}\fi

\bibitem[Borgwardt et~al.(2005)Borgwardt, Ong, Sch{\"o}nauer, Vishwanathan,
  Smola, and Kriegel]{borgwardt2005protein}
Borgwardt, K.~M., Ong, C.~S., Sch{\"o}nauer, S., Vishwanathan, S., Smola,
  A.~J., and Kriegel, H.-P.
\newblock Protein function prediction via graph kernels.
\newblock \emph{Bioinformatics}, 21\penalty0 (suppl\_1):\penalty0 i47--i56,
  2005.

\bibitem[Chen et~al.(2019)Chen, Villar, Chen, and Bruna]{chen2019equivalence}
Chen, Z., Villar, S., Chen, L., and Bruna, J.
\newblock On the equivalence between graph isomorphism testing and function
  approximation with gnns.
\newblock In \emph{Advances in Neural Information Processing Systems}, pp.\
  15868--15876, 2019.

\bibitem[Devlin et~al.(2018)Devlin, Chang, Lee, and Toutanova]{devlin2018bert}
Devlin, J., Chang, M.-W., Lee, K., and Toutanova, K.
\newblock Bert: Pre-training of deep bidirectional transformers for language
  understanding.
\newblock \emph{arXiv preprint arXiv:1810.04805}, 2018.

\bibitem[Doersch et~al.(2015)Doersch, Gupta, and
  Efros]{doersch2015unsupervised}
Doersch, C., Gupta, A., and Efros, A.~A.
\newblock Unsupervised visual representation learning by context prediction.
\newblock In \emph{Proceedings of the IEEE International Conference on Computer
  Vision}, pp.\  1422--1430, 2015.

\bibitem[Fan \& Huang(2019)Fan and Huang]{fan2019recurrent}
Fan, S. and Huang, B.
\newblock Recurrent collective classification.
\newblock \emph{Knowledge and Information Systems}, 60\penalty0 (2):\penalty0
  741--755, 2019.

\bibitem[Gidaris et~al.(2019)Gidaris, Bursuc, Komodakis, P{\'e}rez, and
  Cord]{gidaris2019boosting}
Gidaris, S., Bursuc, A., Komodakis, N., P{\'e}rez, P., and Cord, M.
\newblock Boosting few-shot visual learning with self-supervision.
\newblock In \emph{Proceedings of the IEEE International Conference on Computer
  Vision}, pp.\  8059--8068, 2019.

\bibitem[Hamilton et~al.(2017)Hamilton, Ying, and
  Leskovec]{hamilton2017inductive}
Hamilton, W., Ying, Z., and Leskovec, J.
\newblock Inductive representation learning on large graphs.
\newblock In \emph{Advances in Neural Information Processing Systems}, pp.\
  1024--1034, 2017.

\bibitem[H{\'e}naff et~al.(2019)H{\'e}naff, Razavi, Doersch, Eslami, and
  Oord]{henaff2019data}
H{\'e}naff, O.~J., Razavi, A., Doersch, C., Eslami, S., and Oord, A. v.~d.
\newblock Data-efficient image recognition with contrastive predictive coding.
\newblock \emph{arXiv preprint arXiv:1905.09272}, 2019.

\bibitem[Jensen et~al.(2004)Jensen, Neville, and
  Gallagher]{jensen2004collective}
Jensen, D., Neville, J., and Gallagher, B.
\newblock Why collective inference improves relational classification.
\newblock In \emph{Proceedings of the tenth ACM SIGKDD international conference
  on Knowledge discovery and data mining}, pp.\  593--598, 2004.

\bibitem[Jia \& Benson(2020)Jia and Benson]{jia2020outcome}
Jia, J. and Benson, A.
\newblock Outcome correlation in graph neural network regression, 2020.

\bibitem[Kipf \& Welling(2016)Kipf and Welling]{kipf2016semi}
Kipf, T.~N. and Welling, M.
\newblock Semi-supervised classification with graph convolutional networks.
\newblock \emph{arXiv preprint arXiv:1609.02907}, 2016.

\bibitem[Koller et~al.(2007)Koller, Friedman, D{\v{z}}eroski, Sutton, McCallum,
  Pfeffer, Abbeel, Wong, Heckerman, Meek, et~al.]{koller2007introduction}
Koller, D., Friedman, N., D{\v{z}}eroski, S., Sutton, C., McCallum, A.,
  Pfeffer, A., Abbeel, P., Wong, M.-F., Heckerman, D., Meek, C., et~al.
\newblock \emph{Introduction to statistical relational learning}.
\newblock MIT press, 2007.

\bibitem[Lee et~al.(2017)Lee, Huang, Singh, and Yang]{lee2017unsupervised}
Lee, H.-Y., Huang, J.-B., Singh, M., and Yang, M.-H.
\newblock Unsupervised representation learning by sorting sequences.
\newblock In \emph{Proceedings of the IEEE International Conference on Computer
  Vision}, pp.\  667--676, 2017.

\bibitem[Lu \& Getoor(2003)Lu and Getoor]{lu2003link}
Lu, Q. and Getoor, L.
\newblock Link-based classification.
\newblock In \emph{Proceedings of the 20th International Conference on Machine
  Learning (ICML-03)}, pp.\  496--503, 2003.

\bibitem[Luan et~al.(2019)Luan, Zhao, Chang, and Precup]{luan2019break}
Luan, S., Zhao, M., Chang, X.-W., and Precup, D.
\newblock Break the ceiling: Stronger multi-scale deep graph convolutional
  networks.
\newblock \emph{arXiv preprint arXiv:1906.02174}, 2019.

\bibitem[Maron et~al.(2019)Maron, Fetaya, Segol, and
  Lipman]{maron2019universality}
Maron, H., Fetaya, E., Segol, N., and Lipman, Y.
\newblock On the universality of invariant networks.
\newblock \emph{arXiv preprint arXiv:1901.09342}, 2019.

\bibitem[Misra et~al.(2016)Misra, Zitnick, and Hebert]{misra2016shuffle}
Misra, I., Zitnick, C.~L., and Hebert, M.
\newblock Shuffle and learn: unsupervised learning using temporal order
  verification.
\newblock In \emph{European Conference on Computer Vision}, pp.\  527--544.
  Springer, 2016.

\bibitem[Moore \& Neville(2017)Moore and Neville]{moore2017deep}
Moore, J. and Neville, J.
\newblock Deep collective inference.
\newblock In \emph{Thirty-First AAAI Conference on Artificial Intelligence},
  2017.

\bibitem[Morris et~al.(2019)Morris, Ritzert, Fey, Hamilton, Lenssen, Rattan,
  and Grohe]{morris2019weisfeiler}
Morris, C., Ritzert, M., Fey, M., Hamilton, W.~L., Lenssen, J.~E., Rattan, G.,
  and Grohe, M.
\newblock Weisfeiler and leman go neural: Higher-order graph neural networks.
\newblock In \emph{Proceedings of the AAAI Conference on Artificial
  Intelligence}, volume~33, pp.\  4602--4609, 2019.

\bibitem[Murphy et~al.(2019)Murphy, Srinivasan, Rao, and
  Ribeiro]{murphy2019relational}
Murphy, R.~L., Srinivasan, B., Rao, V., and Ribeiro, B.
\newblock Relational pooling for graph representations.
\newblock \emph{arXiv preprint arXiv:1903.02541}, 2019.

\bibitem[Neville \& Jensen(2000)Neville and Jensen]{neville2000iterative}
Neville, J. and Jensen, D.
\newblock Iterative classification in relational data.
\newblock In \emph{Proc. AAAI-2000 workshop on learning statistical models from
  relational data}, pp.\  13--20, 2000.

\bibitem[Neville et~al.(2003{\natexlab{a}})Neville, Jensen, Friedland, and
  Hay]{neville2003learning}
Neville, J., Jensen, D., Friedland, L., and Hay, M.
\newblock Learning relational probability trees.
\newblock In \emph{Proceedings of the ninth ACM SIGKDD international conference
  on Knowledge discovery and data mining}, pp.\  625--630, 2003{\natexlab{a}}.

\bibitem[Neville et~al.(2003{\natexlab{b}})Neville, Jensen, and
  Gallagher]{neville2003simple}
Neville, J., Jensen, D., and Gallagher, B.
\newblock Simple estimators for relational bayesian classifiers.
\newblock In \emph{Third IEEE International Conference on Data Mining}, pp.\
  609--612. IEEE, 2003{\natexlab{b}}.

\bibitem[Noroozi \& Favaro(2016)Noroozi and Favaro]{noroozi2016unsupervised}
Noroozi, M. and Favaro, P.
\newblock Unsupervised learning of visual representations by solving jigsaw
  puzzles.
\newblock In \emph{European Conference on Computer Vision}, pp.\  69--84.
  Springer, 2016.

\bibitem[Pfeiffer~III et~al.(2015)Pfeiffer~III, Neville, and
  Bennett]{pfeiffer2015overcoming}
Pfeiffer~III, J.~J., Neville, J., and Bennett, P.~N.
\newblock Overcoming relational learning biases to accurately predict
  preferences in large scale networks.
\newblock In \emph{Proceedings of the 24th International Conference on World
  Wide Web}, pp.\  853--863, 2015.

\bibitem[Pham et~al.(2017)Pham, Tran, Phung, and Venkatesh]{pham2017column}
Pham, T., Tran, T., Phung, D., and Venkatesh, S.
\newblock Column networks for collective classification.
\newblock In \emph{Thirty-First AAAI Conference on Artificial Intelligence},
  2017.

\bibitem[Popescul et~al.(2002)Popescul, Ungar, Lawrence, and
  Pennock]{popescul2002towards}
Popescul, A., Ungar, L.~H., Lawrence, S., and Pennock, D.~M.
\newblock Towards structural logistic regression: Combining relational and
  statistical learning.
\newblock \emph{Departmental Papers (CIS)}, pp.\  134, 2002.

\bibitem[Qu et~al.(2019)Qu, Bengio, and Tang]{qu2019gmnn}
Qu, M., Bengio, Y., and Tang, J.
\newblock Gmnn: Graph markov neural networks.
\newblock \emph{arXiv preprint arXiv:1905.06214}, 2019.

\bibitem[Robbins \& Monro(1951)Robbins and Monro]{robbins1951}
Robbins, H. and Monro, S.
\newblock A stochastic approximation method.
\newblock \emph{Ann. Math. Statist.}, 22\penalty0 (3):\penalty0 400--407, 09
  1951.
\newblock \doi{10.1214/aoms/1177729586}.
\newblock URL \url{https://doi.org/10.1214/aoms/1177729586}.

\bibitem[Sen et~al.(2008)Sen, Namata, Bilgic, Getoor, Galligher, and
  Eliassi-Rad]{sen2008collective}
Sen, P., Namata, G., Bilgic, M., Getoor, L., Galligher, B., and Eliassi-Rad, T.
\newblock Collective classification in network data.
\newblock \emph{AI magazine}, 29\penalty0 (3):\penalty0 93--93, 2008.

\bibitem[Srinivasan \& Ribeiro(2019)Srinivasan and
  Ribeiro]{srinivasan2019equivalence}
Srinivasan, B. and Ribeiro, B.
\newblock On the equivalence between node embeddings and structural graph
  representations.
\newblock \emph{arXiv preprint arXiv:1910.00452}, 2019.

\bibitem[Teixeira et~al.(2019)Teixeira, Jalaian, and
  Ribeiro]{teixeira2019graph}
Teixeira, L., Jalaian, B., and Ribeiro, B.
\newblock Are graph neural networks miscalibrated?
\newblock \emph{arXiv preprint arXiv:1905.02296}, 2019.

\bibitem[Vijayan et~al.(2018)Vijayan, Chandak, Khapra, Parthasarathy, and
  Ravindran]{vijayan2018hopf}
Vijayan, P., Chandak, Y., Khapra, M.~M., Parthasarathy, S., and Ravindran, B.
\newblock Hopf: Higher order propagation framework for deep collective
  classification.
\newblock \emph{arXiv preprint arXiv:1805.12421}, 2018.

\bibitem[Xiang \& Neville(2008)Xiang and Neville]{xiang2008pseudolikelihood}
Xiang, R. and Neville, J.
\newblock Pseudolikelihood em for within-network relational learning.
\newblock In \emph{2008 Eighth IEEE International Conference on Data Mining},
  pp.\  1103--1108. IEEE, 2008.

\bibitem[Xu et~al.(2018)Xu, Hu, Leskovec, and Jegelka]{xu2018powerful}
Xu, K., Hu, W., Leskovec, J., and Jegelka, S.
\newblock How powerful are graph neural networks?
\newblock \emph{arXiv preprint arXiv:1810.00826}, 2018.

\bibitem[Yang et~al.(2017)Yang, Ribeiro, and Neville]{yang2017stochastic}
Yang, J., Ribeiro, B., and Neville, J.
\newblock Stochastic gradient descent for relational logistic regression via
  partial network crawls.
\newblock \emph{arXiv preprint arXiv:1707.07716}, 2017.

\end{thebibliography}
\bibliographystyle{icml2020}

\clearpage

 \clearpage
\onecolumn
\appendix

\section*{Supplementary Material of collective learning GNN}

\section{Proof of \Cref{thm:col}}

We restate the theorem for completeness.
\thmCCun*
\begin{proof}
Let $\hat{Y}(v) = \varphi (\Gamma^\star(v,G))$ be a classifier function that takes the most expressive representation $\Gamma^\star(v,G)$ of node $v$ as input and outputs a predicted class label for $v$. 

Let $\hat{\mY}^{(t)}$ be the set of predicted labels at iteration $t$ of collective classification and let $Y_v$ be the true label of node $v \in V$. 
Then either (1) $Y_v \indep_{\Gamma^\star(v,G),\varphi} \hat{\mY}^{(t)}$, or (2) $Y_v \not\!\indep_{\Gamma^\star(v,G),\varphi} \hat{\mY}^{(t)}$. 

Case (1): Given the classifier $\varphi$ and the most expressive representation $\Gamma^\star(v,G)$, the true label of $v$ is independent of the labels predicted with collective classification. In this case, the predicted labels of $v$'s neighbors offer no additional information and, thus, collective classification is unnecessary.

Case (2): In this case, the true label of $v$ is not independent of the predicted labels. By Theorem 1 of \citet{srinivasan2019equivalence}, we know that for any random variable $H_v$ attached to node $v \in V$, it must be that $\exists \varphi'$ a measurable function independent of $G$ s.t.\ 
\[
H_v \stackrel{\text{a.s.}}{=} \varphi' (\Gamma^\star(v,G),\epsilon_v),\]
where $\epsilon_v$ is an noise source exogenous to $G$ (pure noise), and a.s.\ implies almost sure equality. Defining $H_v := Y_v$,  
\[ \varphi' (\Gamma^\star(v,G),\epsilon_v) \not\!\indep_{\Gamma^\star(v,G),\varphi} \hat{\mY}^{(t)}, \]
which means $\hat{\mY}^{(t)}$ must either be dependent on $\epsilon_v$ or contain domain knowledge information about the function $\varphi'$ that is not in $\varphi$. Since $\hat{\mY}^{(t)}$ is a vector of random variables fully determined by $G$ and $\varphi$, it cannot depend on an exogenous variable $\epsilon_v$, Thus, the predictions must contain domain knowledge of $\varphi'$. 
Hence, we can directly incorporate this domain knowledge into another classifier $\varphi^\dagger$ s.t. $Y_v \indep_{\Gamma^\star(v,G),\varphi^\dagger} \hat{\mY}^{(t)}$, for instance $\varphi^\dagger$ is a function of $\varphi'$. In this case, $\varphi^\dagger$ will predict the label of $v$ with equal or higher accuracy than collective classification based on predicted labels $\hat{\mY}$, which finishes our proof.
\vspace{5mm}


\end{proof}

\section{Proof of \Cref{thm:cl-power}}

\begin{figure}[h!]
        \centering
        \includegraphics[width=15cm]{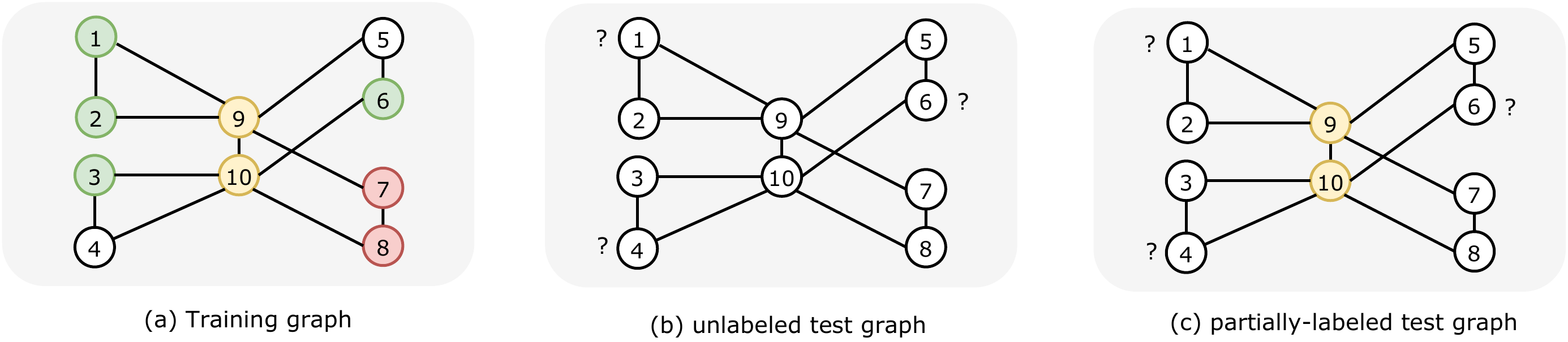}
        \caption{Training/testing graphs. Colors represent available node labels, and testing nodes are marked with question marks. WL-GNN cannot differentiate between the red and green nodes. }
        \label{fig:tri}
    \end{figure}
    
\thmclpower*
\begin{proof} 
As defined, $\text{WLGNN}$ is a most-expressive WL-GNN.
We need to prove $\cG(\text{WL-GNN}) \subsetneq \cG(\text{\ourmodel}).$
We will do that by first showing $\cG(\text{\ourmodel}) = \cG(\text{WL-GNN}) \cup S'$ and then showing that $\exists G \in \cG(\text{\ourmodel}) \text{ s.t. } G \not\in \cG(\text{WLGNN})$.

{\em $\cG(\text{\ourmodel}) = \cG(\text{WL-GNN}) \cup S'$}: First, we need to show that for any mask $M \in \cM$, $\exists S \subseteq \cG(\text{\ourmodel})$ such that $S = \cG(\text{WL-GNN})$.
This is clearly true since, for labeled tests, in \Cref{eq:Z_labeled} we can always construct a $\text{WLGNN}^0$ for a \ourmodel 
\begin{equation} \label{eq:Z0}
\begin{split}
\mZ_v^{(t)}(\text{WLGNN}^0)= \expectation_{\hat{\mY}^{(t-1)}} \big[ &\text{WLGNN}^0(\mX^\text{(tr)}, \mY^{\text{(tr)}}_{L}\odot M, 
 \hat{\mY}^{(t-1)}\odot \overline{M}, \mA^\text{(tr)} ; \Theta)_v \big],
\end{split}
\end{equation}
that ignores the $\hat{\mY}$ inputs.
Similarly, for unlabeled tests,  in \Cref{eq:Z_labeled}  we can always construct a $\text{WLGNN}^1$ for a \ourmodel 
\begin{equation*}
        \mZ_v^{(t)}(\text{WLGNN}^1) = \expectation_{\hat{\mY}^{(t-1)}} \left[ \text{WLGNN}^1(\mX^\text{(tr)}, \hat{\mY}^{(t-1)}, \mA^\text{(tr)} ; \Theta)_v \right],
\end{equation*}
that ignores the $\hat{\mY}^{(t-1)}$ inputs.

{\em $\exists G \in \cG(\text{\ourmodel}) \text{ s.t. } G \not\in \cG(\text{WL-GNN})$}:
Let $G$ be the graph in \Cref{fig:tri}.
We will first consider the case where the test data has partial labels. The case without labels follows directly from it.
Using the graph $G$ in \Cref{fig:tri}(a) (training) and \Cref{fig:tri}(c) (partially-labeled testing) we show that a WL-GNN is unable,  in test, to correctly give representations to the left-most nodes $\{1,2,3,4\}$ that are distinct from the right-most nodes $\{5,6,7,8\}$ (the same happens for the unlabeled test graph in \Cref{fig:tri}(b)).
We then show that the representation $\mZ^{(t)}_v$ of \Cref{eq:Z0} is able to distinguish these two sets of nodes.

{\em WL-CNN is not powerful enough  to give distinct representations to nodes $\{1,\ldots,8\}$ in \Cref{fig:tri}(c)}:
Consider giving an arbitrary feature value (say, the ``color white'') to all uncolored nodes $\{1,\ldots,8\}$ in \Cref{fig:tri}(c).
We will start showing that the 1-WL test is unable to give different colors to the nodes $\{1,\ldots,8\}$ in this graph.
Since WL-GNNs are no more expressive than the 1-WL test~\citep{xu2018powerful,morris2019weisfeiler}, showing that the above is a stable coloring for nodes $\{1,\ldots,8\}$ in the 1-WL test, proves the first part of our result.
A stable 1-WL coloring is defined as a coloring scheme on the graph that has a 1-to-1 correspondence with the colors in the previous step of the 1-WL algorithm.
Since the input to the hash function of the 1-WL test is the same for all of nodes $v \in \{1,\ldots,8\}$: The node itself has color white while the color set of the neighbors is the set $\{\text{white},\text{yellow}\}$. 
In the next 1-WL round, all the white nodes will be mapped to the same color by the hash function. The colors of node $\{9,10\}$ will be not the same as $\{1,\ldots,8\}$.
Hence, the initial coloring of all nodes $\{1,\ldots,8\}$ white and $\{9,10\}$ yellow is a stable coloring for 1-WL.
Consequently, WL-GNN will give the same representation to all nodes in $\{1,\ldots,8\}$.

    
    
\ourmodel\ {\em gives the same representations within the sets $\{1,\ldots,4\}$ and $\{5,\ldots,8\}$}: 
At iteration $t \geq 0$ of \ourmodel,
we start with the base of the recursion $\mY^{(t-2)} = {\bf 0}$.
Now consider a given mask $\mM^{(t)}  \in \cM$.
 Note that to sample $\hat{\mY}^{(t-1)}_v$ for $v \in \{1,\ldots,8\}$ we apply $\hat{\mY}^{(t-2)} = {\bf 0}$ into  \Cref{eq:Z_labeled} to obtain $\mZ_v^{(t-1)}$, and then apply $\mZ_v^{(t-1)}$ into \Cref{eq:yhat}, defining $\mX_{\mY_L^\text{(tr)},\hat{\mY}^{(t-1)},\mM^{(t)}}^\text{(tr)} = [\mX^\text{(tr)},\mY^{\text{(tr)}}_{L}\odot \mM^{(t)} , {\bf 0}]$, which will give us $\hat{\mY}^{(t-1)}_v \sim P(\mY^{(t-1)}_v \, | \, \text{WLGNN}([\mX^\text{(tr)},\mY^{\text{(tr)}}_{L}\odot \mM^{(t)} , {\bf 0}], \mA^\text{(tr)} ; \Theta)_v)$, and any classes has a non-zero probability of being sampled since our output is a softmax.
 
Since nodes $\{1,\ldots,8\}$ all get the same representation in the above $\text{WLGNN}$, their respective sampled $\hat{\mY}^{(t-1)}_v$, $v \in \{1,\ldots,8\}$, will have the same distribution but possibly not the same values (due to sampling).
Note that the nodes $\{1,\ldots,4\}$ will get the same average in \Cref{eq:Z0} since $\hat{\mY}^{(t-1)}_v$, $v \in \{1,\ldots,4\}$, has the same distribution and the nodes are isomorphic (even given the colors on nodes 9 and 10).
Similarly, the nodes $\{5,\ldots,8\}$ will also get the same average in \Cref{eq:Z0}.

\ourmodel\ {\em  gives distinct  representations accross the sets  $\{1,\ldots,4\}$ and $\{5,\ldots,8\}$}:
Finally, we now prove that exists a WL-GNN, which we will denote $\text{WLGNN}^2$, such that $\mZ_v^{(t)}(\text{WLGNN}^2) \neq \mZ_u^{(t)}(\text{WLGNN}^2)$ for $v \in \{1,\ldots,4\}$ and $u\in \{5,\ldots,8\}$.
We will show that there is a joint sample of $\hat{\mY}_1^{(t-1)},\ldots,\hat{\mY}_8^{(t-1)}$ where there is no symmetry between the representations of nodes in $\{1,\ldots,4\}$ and $\{5,\ldots,8\}$.
Since each layer of WLGNN$^2$ can have different parameters, we can easily encode differences in the number of hops it takes to reach a certain color. 
Moreover, at any WLGNN$^2$ layer, the representation of a node can perfectly encode its own last-layer representation and the last-layer representation of its neighbors through a most-expressive multiset representation function~\cite{xu2018powerful}.

It is enough for us to show that for a sampled $\hat{\mY}^{(t-1)}$ the sets of nodes $\{1,\ldots,4\}$ and $\{5,\ldots,8\}$ can get distinct unique representations under WLGNN$^2$.
By unique, we mean, $\{1,\ldots,4\}$ can get representations in WLGNN$^2$ that cannot be obtained by the nodes in $\{5,\ldots,8\}$.
This representation uniqueness makes sure the averages in \Cref{eq:Z_labeled} are different.
Without loss of generality we will consider giving a special sampled label to only one node $i \in \{1,5\}$ in one of the sets.
The sampled labels
$\hat{\mY}_i^{(t-1)} = \text{green}$, while all other nodes $\{1,\ldots,8\}\backslash \{i\}$ get red, will happen with non-zero probability, hence, they must be part of the expectation in \Cref{eq:Z_labeled}.
Note that node $2$ (for $i=1$) and $6$ (for $i=5$) will feel the effects of the green color in their neighbors differently.
That is, for $i=5$ there is a parameter choice for the layers of $\text{WLGNN}^2$ where the representation of node 6 uniquely encodes that the color green is within hops 1 (node 5) and 3 (from node 5 through nodes 9 and 10) of node 6 (if 6 treats its own representation differently from its neighbors). 
For $i=1$, node 2's representation will encode that green is observed hops 1 (node 1) and 2 (from node 1 through node 9) (similarly, 2 treats its own representation differently from its neighbors).
Hence, these representations can be made unique by $\text{WLGNN}^2$, i.e., no other $\hat{\mY}^{(t-1)}$ assignments will create the same patterns for nodes 2 and 6, and thus, since $\text{WLGNN}^2$ has most-expressive multiset representations, it can give a unique representation to nodes 2 and 6 for these two unique $\hat{\mY}^{(t-1)}$ configurations.
These unique representations are enough to ensure $\mZ_2^{(t)}(\text{WLGNN}^2) \neq \mZ_6^{(t)}(\text{WLGNN}^2)$ for any $t \geq 1$, which concludes our proof.

\end{proof}

\section{Proof of \Cref{prop:expand-power}}

\begin{figure}[h!]
    \hspace{-0.8cm}
    
    \subfloat[Training graph]{\includegraphics[width=5cm]{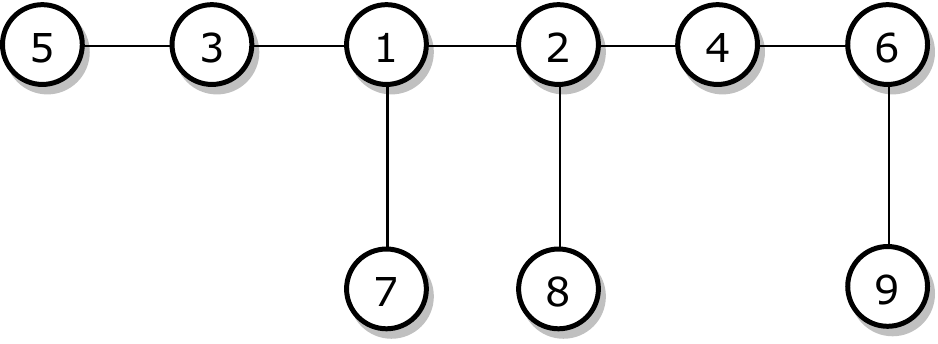}}
    \hspace{0.8cm}
    \subfloat[$2^{nc}$-order neighborhood for label prdiction]{\includegraphics[width=7cm]{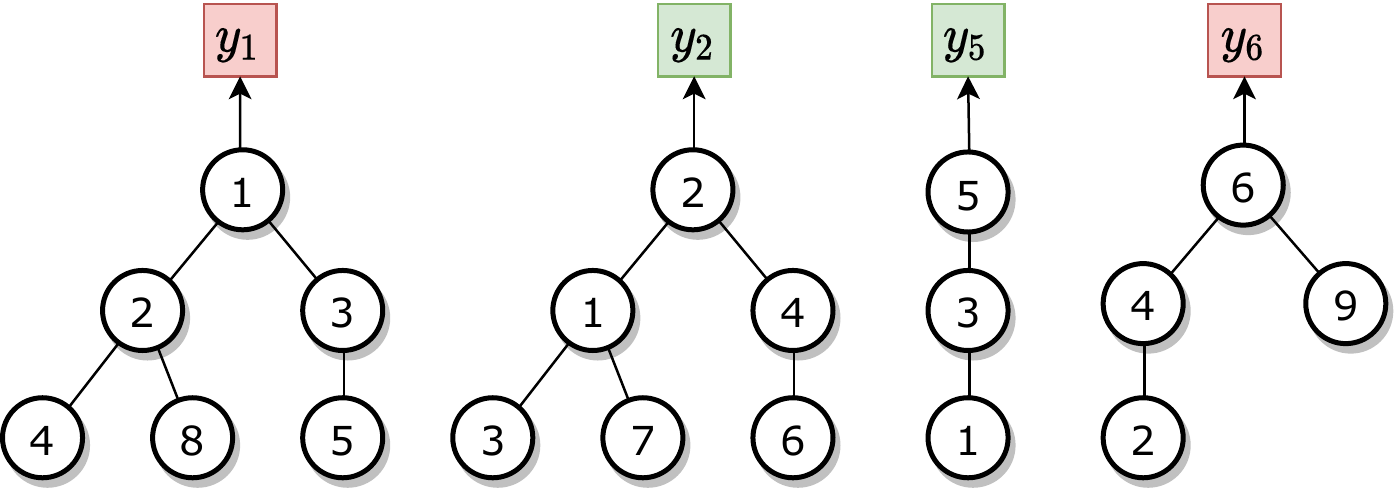}}
    \caption{WL-GNN using 2nd-order neighborhood cannot differentiate node 1 and 2, but \ourmodel built on this WLGNN can break the local isomorphism.}
    \label{fig:line_graph}
\end{figure}

\propexpandpower*
\begin{proof}
Let $G$ be the graph in \Cref{fig:line_graph}(a) with no node features, and let WLGNN be of order $2$, meaning it will generate node embeddings based on 2nd-order neighborhoods (shown in (b)). Since node $1$ and $2$ have the same 2nd-order neighborhood structure, WLGNN will generate identical node representation for them, which gives random label predictions. Meanwhile, as nodes $5$ and $6$ have distinct 2nd-order neighborhood structures, WLGNN generates different node representations for them, which enables the model to learn from the labels $y_5$ and $y_6$. We can assume the predicted label probability $P(\hat{\mY}^{(0)}_5= \text{green}) = 0.99$ and $P(\hat{Y}^{(0)}_6 = \text{red}) = 0.98$. For \ourmodel, at iteration $t = 1$, we sample $\hat{\mY}^{(0)}$ from the WLGNN output $P(\hat{\mY}^{(0)})$ and use the samples as input. In the worst case, nodes $3, 4$ and $7, 8$ get the same distribution and sampled labels (i.e. $\hat{\mY}_3^{(0)}$ = $\hat{\mY}_4^{(0)}$, $\hat{\mY}_7^{(0)}$ = $\hat{\mY}_8^{(0)}$). Since the distribution of $\hat{\mY}_5^{(0)}$ and $\hat{\mY}_6^{(0)}$ are different, the samples of $\hat{\mY}_5^{(0)}$ and $\hat{\mY}_6^{(0)}$ are different, which breaks the tie between the 2nd-order neighborhood of nodes $1$ and $2$. Therefore, \ourmodel will produce different node representation starting from iteration $t = 1$ for nodes $1$ and $2$, which enables the model to learn from the training label $y_1$ and $y_2$, and thus gives more accurate predictions. 
\end{proof}

The advantage of collective inference is more clear when it is used to strengthen less-expressive local classifiers, e.g. logistic regression
Although GNN are much powerful than these local classifiers by aggregating high(er)-order graph information, collective learning can still help if GNN fail to make use of ``global'' information in graphs (or equivalently, if the order of GNN is small than graph diameter).
Previous work \cite{jensen2004collective} investigating the power of collective inference also showed that methods for collective inference benefit from \textit{a clever factoring of the space of dependencies}, by arguing that \textit{these (collective inference) methods benefit from information propagated from outside of their local neighborhood. Predictions about the class label on other objects essentially ``bundle information'' about the graph beyond the immediate neighborhood.} 

\section{Proof of \Cref{prop:opt}}
\propopt*
\begin{proof}
In our optimization, we only need to sample two variables 
$\hat{\mY}^{(t-1)}$ and $\mM^{(t)}$.
We obtain unbiased bounded-variance estimates of the derivative of the loss function if we sample $\mM^{(t)} \sim \text{Uniform}(\cM)$ (and exact values when $\mM^{(t)} = {\bf 0}$).
We can now compound that with unbiased bounded-variance estimates of the derivative if we estimate the expectation in \Cref{eq:Z_labeled} $\{\mZ^{(t-1)}_v\}_{v \in V}$ by i.i.d.\ sampling $\hat{\mY}^{(t-1)}$.
The loss in \Cref{eq:mcgnn_obj_mask} is convex on $\mZ_v^{(t)}$ since the negative log-likelihood of the multi-class logistic regression is convex on ${\bf W}$, which means it is also convex on $\mZ_v^{(t)}$ as the loss is defined on the affine transformation ${\bf W}\mZ_v^{(t)}$. 
The expectation of the loss always exist, since we assume $\nabla_{\Theta} ({\bf W}\mZ_v^{(t)}(\mM^{(t)};\Theta))_{y^\text{(tr)}_v}$ is bounded for all $\forall v \in V_L^\text{(tr)}$.
Hence, as the loss is convex w.r.t.\ $\mZ_v^{(t)}$, the expection w.r.t.\ $\mZ_v^{(t)}$ exists, and we obtain an unbiased estimate of $\mZ_v^{(t)}$, we can apply Jensen's inequality to show that the resulting Robbins-Monro stochastic optimization optimizes an upper bound of the loss in \Cref{eq:mcgnn_obj_mask}.
\end{proof}

\section{Additional information on datasets and experiment setup}\label{appx:datasets}

{\em Datasets.}
We use five datasets for evaluation, with the dataset statistics shown in \Cref{tab:data_stat}. 
\begin{table}[h!]
    \centering
    \caption{Dataset statistics}
    \label{tab:data_stat}
    \begin{tabular}{c|c|c|c|c} \toprule
    Dataset & \# Nodes  & \# Attributes & \# Classes & \# Test \\ \hline
    Cora    & 2708 & 1433 &  7 & 1000\\
    Pubmed  & 19717 & 500 & 3  & 1000\\
    Friendster & 43880 & 644 &5 & 6251 \\
    Facebook & 4556 & 3 & 2 & 1000 \\ 
    Protein & 12679 & 29 & 2 & 2376\\ \bottomrule
    \end{tabular}
\vspace{-2mm}
\end{table}

\begin{itemize}[leftmargin=*]
\vspace{-2mm}
    \item {\bf Cora} and {\bf Pubmed} are benchmark datasets for node classification tasks from \cite{sen2008collective}. They are citation networks with nodes representing publications and edges representing citation relation. Node attributes are bag-of-word features of each document, and the predicted label is the corresponding research field. 
\vspace{-1mm}
    \item {\bf Facebook} \cite{yang2017stochastic} is social network of Facebook users from Purdue university, where nodes represent users and edges represent friendship. The features of the nodes are: religious views, gender and whether the user’s hometown is in Indiana. The predicted labels is political view. 
\vspace{-1mm}
    \item {\bf Friendster} \cite{teixeira2019graph} is social network. Nodes represent users and edges represent friendship. The node attributes include numerical features (e.g number of photos posted, etc) and categorical features (e.g. gender, college, music interests, etc), encoded as binary one-hot features. The node labels represent one of the five age groups: 0-24, 25-30, 36-40, 46-50 and over 50. This  version of the graph contain 40K nodes, 25K of which are labeled.
\vspace{-1mm}
    \item {\bf Protein} is a collection of protein graphs  from \cite{borgwardt2005protein}. Each node is labeled with a functional role of the protein, and has a 29 dimensional feature vector. We use 85 graphs with an average size of 150 nodes. 
\vspace{-1mm}
\end{itemize}

{\em Data splits.}
To conduct inductive learning tasks, we have to properly split the graphs into labeled and unlabeled parts. For datasets containing only one graph (Cora, Pubmed, Facebook and Friendster), we randomly sample a connected component to be $ V_L^\text{(tr)}$, and then sample a test set ($V_T$) from the remainder nodes ($V_U$). To make partially-labeled test data available, we sample another connected component as $V_L^\text{(te)}$ with the same size as $ V_L^\text{(tr)}$. The nodes are sampled to guarantee that there is no overlapping between any two sets of $ V_L^\text{(tr)}$, $ V_L^\text{(te)}$ and $V_T$. Here $G^{\text{(tr)}}$ and $G^{\text{(te)}}$ have the same graph structure but with different labeled nodes.

For the protein dataset, as we have 85 disjoint graphs, we randomly choose 51 (60\%) graphs for training, 17 (20\%) graphs for validation and the remaining 17 (20\%) graphs for testing. To simulate semi-supervised learning settings, we mask out 50\% of true labels on the training graphs. For the tasks with partially-labeled test data, we randomly select 50\% of the nodes in the test graphs as labeled nodes, and test on the remaining 50\% nodes. We run five trials for all the experiments, and in each trial we randomly split the nodes/graphs as described. 

As seen in \Cref{sec: exp}, to approximate an inductive learning setting, we use a different train/test data split procedure (i.e. connected training set) on Cora and Pubmed networks from the public version (i.e. random training set) used in most of the existing GNN models \citep{kipf2016semi, luan2019break}. This is illustrated in \Cref{fig:data_split}, where the random training set of the traditional GNN evaluation methods (in e.g., \citep{kipf2016semi, luan2019break}) is shown on the left, contrasted with our harder task of connected training set shown on the right. 
This difference in task is the reason why the model performance reported in our paper is not directly comparable with the reported results in previous GNN papers, even though we used the same implementations and hyperparameter search procedures.

\begin{figure}[h!]
    \centering
    \includegraphics[width=14cm]{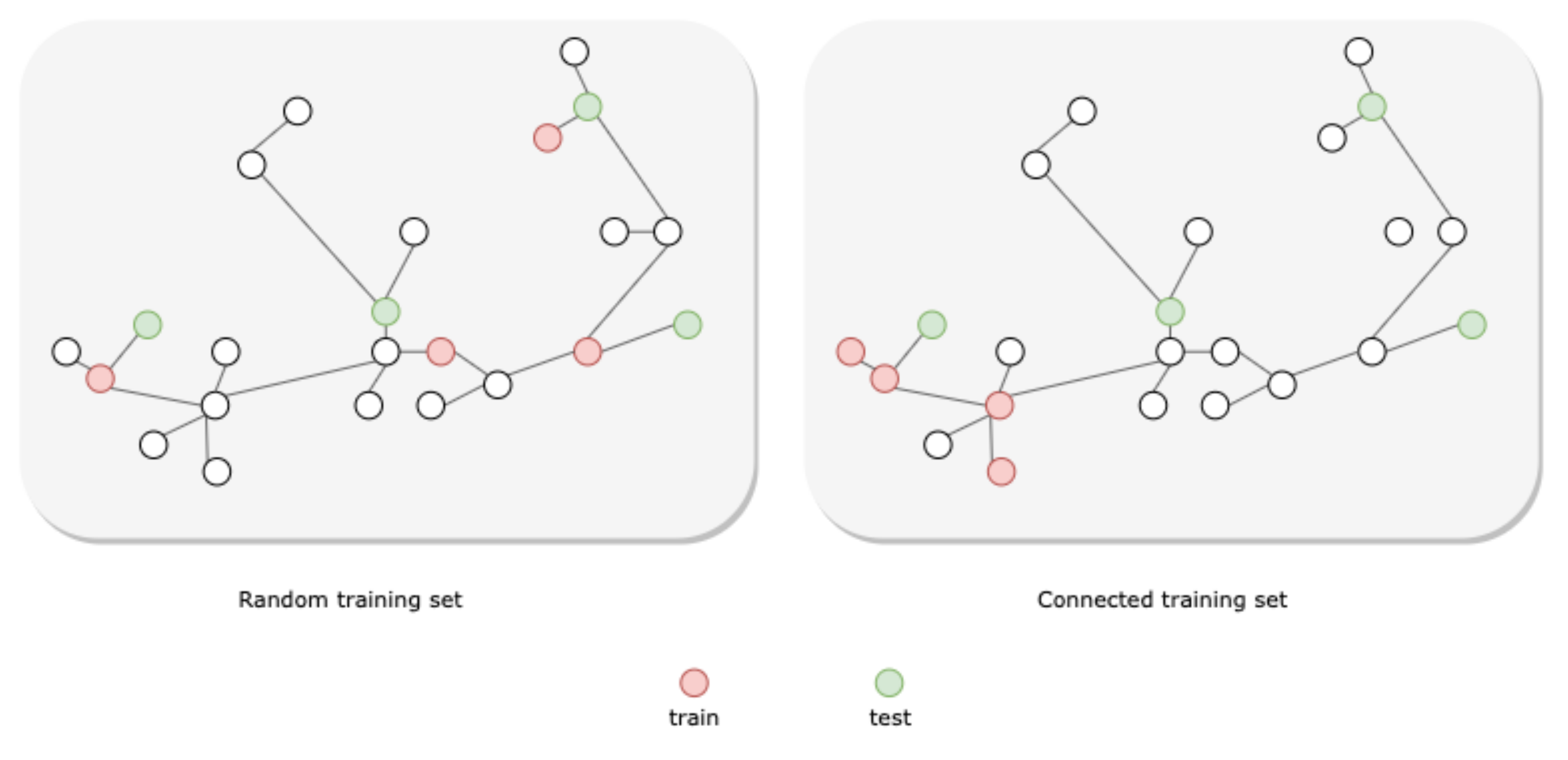}
    \caption{The different data splits between traditional GNN train/test split evaluation (left) and our---more realistic--- connected train/random test split evaluation (right)}
    \label{fig:data_split}
\end{figure}

{\em Hyperparameter setting.}
For hyperparameter tuning, we searched the initial learning rate within \{0.005, 0.01, 0.05\} with weight decay of $0.0005$. Dropout is applied to all the layers with $p = 0.5$. Hidden units are searched within \{16, 32\} if the dataset wasn't used by the original GNN paper, or set as the same number as originally chosen in the GNN paper. The number of layers is set to $2$ for both GCN \cite{kipf2016semi} and GraphSage \cite{hamilton2017inductive} as used in their paper, and we use $10$ layers for TK \cite{xu2018powerful}. For GraphSage \cite{hamilton2017inductive}, the neighborhood sample size is set to $5$. We use the same GNN structure (i.e. layers, hidden units, neighborhood sample size) for the non-collective version and in \ourmodel for fair comparison. 

For \ourmodel, the additional hyperparameters are (1) the sample size of predicted labels $\hat{Y}$ ($K$), and (2) the number of model iterations ($T$). we set sample size $K = 8$ for friendster dataset and $K = 10$ for all other datasets. For label rate of $50\%$, the model is trained for $T = 10$ iterations, and each iteration contains $100$ epochs. Note that we sample a new binary mask for each epoch as described in \Cref{sec:framework}. For label rate of $0\%$, the model is trained for $T = 3$ iterations, and each iteration contains up to $500$ epochs which can be early stopped if the validation accuracy decreases for a specified consecutive epochs. The numbers of iterations are empirically determined as only marginal improvements are observed after $3$ iterations for unlabeled test data and $10$ iterations for partially-labeled test data. The validation accuracy is used to choose the best epoch. 

Note that the hyper-parameter tuning could be done more aggressively to further boost the performance of \ourmodel, e.g. using more layers for TK \cite{xu2018powerful}, but our main goal is to evaluate the relative improvements of \ourmodel on the corresponding non-collective GNNs.

\section{Running times for GNN models on multiple datasets} \label{sec: append_runtime}

\begin{table}[h!]
    \centering
    \resizebox{.8\textwidth}{!} {
    \begin{tabular}{ccccc}
    \multirow{3}{*}{Dataset}   & \multirow{3}{*}{GNN structure} & \multicolumn{3}{c}{Running time (minutes)} \\
    \cmidrule(lr){3-5}
    &   & \multirow{2}{*}{GNN} & \multicolumn{2}{c}{\ourmodel} \\ 
    \cmidrule(lr){4-5}
       &   &  & unlabeled $G^\text{(te)}$ & partially-labeled $G^\text{(te)}$\\ 
       \cmidrule(lr){1-1} \cmidrule(lr){2-2} \cmidrule(lr){3-3} \cmidrule(lr){4-4} \cmidrule(lr){5-5}
    \multirow{2}{*}{Cora}    & GCN    & 0.09  & 0.83 & 3.65 \\
       & TK    & 0.12  & 1.91 & 5.74 \\
       \cmidrule(lr){1-1} \cmidrule(lr){2-2}
    \multirow{2}{*}{Pubmed} & GCN & 0.49 & 5.38 & 21.87 \\
         & TK  & 0.52 & 7.82 & 51.62 \\ 
         \cmidrule(lr){1-1} \cmidrule(lr){2-2}
       \multirow{2}{*}{Friendster} & GCN & 1.04 & 17.93 & 66.31 \\
        & TK & 1.93 & 30.17  & 132.33\\ 
        \cmidrule(lr){1-1} \cmidrule(lr){2-2}
      \multirow{2}{*}{Facebook} & GCN & 0.02 & 1.44 & 5.37\\
        & TK & 0.05 & 2.41  & 7.22\\   
        \bottomrule
    \end{tabular}
    }
    \caption{The running time (in minutes) for \ourmodel and its corresponding GNNs.}
    \label{tab:running_time}
    \vspace{-4mm}
\end{table}

\Cref{tab:running_time} shows the running times for \ourmodel and the corresponding non-collective GNNs on various datasets. As mentioned in \label{appx:datasets}, for partially-labeled $G^\text{(te)}$, \ourmodel applied random masks at each epoch, and ran for $10$ iterations, whereas for unlabeled $G^\text{(te)}$, \ourmodel ran for $3$ iterations. 

\section{\ourmodel performance with varying training label rates}
To investigate the impact of the training label rates on the node classification accuracy, we repeated the experiments on Cora and Pubmed datasets with various numbers of training labels, on unlabeled test data and partially-labeled test data. \Cref{tab:results_unlabeled_learning_curve} and \Cref{tab:results_labeled_learning_curve} show the results for test labels rates of $0\%$ and $50\%$ respectively. We can see that in general \ourmodel framework achieved a larger improvement when fewer labels are available in the training graph. For example, with label rates of $1.52\%$, $1.90\%$ and $3.04\%$ on Pubmed, the improvements of our framework combining with GCN are $4.48\%$, $3.30\%$ and $0.98\%$ respectively. This shows that the \ourmodel framework is especially useful when only a small number of labels are available in training, which is the common use case of GNNs.

\begin{table*}[h!]
\centering
\resizebox{\textwidth}{!}{%
\begin{tabular}{llrrrrrr}
 & & \multicolumn{3}{c}{Cora}    & \multicolumn{3}{c}{Pubmed} \\ 
 \cmidrule(lr){3-5}
 \cmidrule(lr){6-8} 
{\bf \# train labels}  & & \multicolumn{1}{c}{85 (3.21\%)} & \multicolumn{1}{c}{105 (3.88\%)} & \multicolumn{1}{c}{140 (5.17\%)} & \multicolumn{1}{c}{300 (1.52\%)} & \multicolumn{1}{c}{375 (1.90\%)}  &\multicolumn{1}{c}{600 (3.04\%)} \\ 
\cmidrule(lr){1-1}  \cmidrule(lr){3-5}
 \cmidrule(lr){6-8}
{\bf \% test labels}  & & \multicolumn{1}{c}{0\%} & \multicolumn{1}{c}{0\%} & \multicolumn{1}{c}{0\%} & \multicolumn{1}{c}{0\%} & \multicolumn{1}{c}{0\%}  &\multicolumn{1}{c}{0\%} \\
\cmidrule(lr){1-1}  \cmidrule(lr){3-5}
 \cmidrule(lr){6-8} 
Random & & 14.28 (0.00) & 14.28 (0.00)& 14.28 (0.00)& 33.33 (0.00)& 33.33 (0.00)& 33.33 (0.00)\\ 
\cmidrule{1-2}
\multirow{2}{*}{GCN} &  -    &   45.90 (3,26) & 47.54 (3.50) & 61.92 (1.50) & 52.68 (2.36) & 55.76 (3.32)  & 70.38 (2.31) \\
& \plusours &  \cellcolor{LightCyan} \textbf{+6.29 (1.49)} & \cellcolor{LightCyan}  \textbf{+5.20 (1.12)}  &\cellcolor{LightCyan} \textbf{+5.18 (0.66)} & \cellcolor{LightCyan}+4.48 (2.33) & \cellcolor{LightCyan}\textbf{+3.30(1.52)}  &\cellcolor{LightCyan} \textbf{+0.98(0.23)}  \\
\cmidrule{1-2}
\multirow{2}{*}{GS} & -      &  50.69 (1.50) & 56.24 (2.08) & 66.08 (0.96)& 59.34 (3.47)  & 64.37 (3.70)&  72.08 (1.87) \\
 & \plusours & \cellcolor{LightCyan} \textbf{+2.35 (0.56)} &\cellcolor{LightCyan} +\textbf{2.78 (0.59)} &\cellcolor{LightCyan} \textbf{+1.95 (0.45)} &\cellcolor{LightCyan} \textbf{+1.48 (0.41)} &\cellcolor{LightCyan} \textbf{+0.62 (0.21)}$^*$  &\cellcolor{LightCyan} \textbf{+0.65 (0.25)} \\ 
 \cmidrule{1-2}
\multirow{2}{*}{TK} & - &  63.74 (2.61) & 70.01 (1.93)  & 74.45 (0.34)  & 61.13 (5.03)  & 63.09 (5.57) & 75.46 (1.46) \\
 & \plusours & \cellcolor{LightCyan} \textbf{+0.96 (0.30)}$^*$ & \cellcolor{LightCyan}\textbf{+1.08 (0.37)}$^*$ & \cellcolor{LightCyan}\textbf{+0.30 (0.11)}$^*$ & \cellcolor{LightCyan}\textbf{+1.00 (0.21)}$^*$ & \cellcolor{LightCyan} \textbf{+1.34 (0.20)}  & \cellcolor{LightCyan}\textbf{+1.03 (0.22)}$^*$ \\
\cmidrule{1-2}
PL-EM$^\text{\cite{pfeiffer2015overcoming}}$& - & 20.70 (0.05) & 24.65 (0.38) & 30.46 (1.48) & 38.05 (4.85) & 44.85 (5.75) & 51.25 (3.06) \\
ICA$^\text{\cite{lu2003link}}$& - & 26.20 (0.51) & 41.05 (0.50) & 49.51 (1.90) & 44.40 (1.92) & 45.62 (0.86) & 54.26 (2.09)  \\
GMNN$^\text{\cite{qu2019gmnn}}$& -& 49.05 (1.86) & 54.55 (1.15) & 67.16 (1.86) & 58.03 (3.62) & 62.50 (3.77) & 71.03 (4.54) \\
 \bottomrule
\end{tabular}
}
\caption{Node classification accuracy with \textbf{unlabeled} test data varying number of \textbf{training labels} on Cora and Pubmed datasets. Numbers in bold represent significant improvement in a paired t-test at the $p < 0.05$ level, and numbers with $^*$ represent the best performing method in each column.}
\label{tab:results_unlabeled_learning_curve}
\vspace{-4mm}
\end{table*}

\begin{table*}[h!]
\centering
\resizebox{\textwidth}{!}{%
\begin{tabular}{llrrrrrr}
 & & \multicolumn{3}{c}{Cora}    & \multicolumn{3}{c}{Pubmed} \\ 
 \cmidrule(lr){3-5}
 \cmidrule(lr){6-8} 
{\bf \# train labels}  & & \multicolumn{1}{c}{85 (3.21\%)} & \multicolumn{1}{c}{105 (3.88\%)} & \multicolumn{1}{c}{140 (5.17\%)} & \multicolumn{1}{c}{300 (1.52\%)} & \multicolumn{1}{c}{375 (1.90\%)}  &\multicolumn{1}{c}{600 (3.04\%)} \\ 
\cmidrule(lr){1-1}  \cmidrule(lr){3-5}
 \cmidrule(lr){6-8} 
{\bf \% test labels}  & & \multicolumn{1}{c}{50\%} & \multicolumn{1}{c}{50\%} & \multicolumn{1}{c}{50\%} & \multicolumn{1}{c}{50\%} & \multicolumn{1}{c}{50\%}  &\multicolumn{1}{c}{50\%} \\
\cmidrule(lr){1-1}  \cmidrule(lr){3-5}
 \cmidrule(lr){6-8} 
Random & & 14.28 (0.00) & 14.28 (0.00)& 14.28 (0.00)& 33.33 (0.00)& 33.33 (0.00)& 33.33 (0.00)\\ 
\cmidrule{1-2}
\multirow{2}{*}{GCN} &  -    &   36.38 (1.35) & 48.31 (2.58) & 64.02 (1.54) & 54.11 (4.86) & 56.31 (3.10) & 68.13 (1.84) \\
& \plusours &  \cellcolor{LightCyan} \textbf{+15.69 (3.20)} & \cellcolor{LightCyan} {\bf +14.02 (3.38)}  & \cellcolor{LightCyan}{\bf +6.31 (0.89)} & \cellcolor{LightCyan}\textbf{+5.62 (1.17)} & \cellcolor{LightCyan}\textbf{+5.06 (3.24)}  & \cellcolor{LightCyan}\textbf{+ 4.60 (2.50)}  \\
\cmidrule{1-2}
\multirow{2}{*}{GS} & -      & 48.42 (2.82) & 57.52 (2.15)  & 65.04 (0.79) & 58.52 (5.42)  & 59.77 (4.68)   & 75.01 (4.86) \\
 & \plusours &\cellcolor{LightCyan} \textbf{+4.52 (0.84)} &\cellcolor{LightCyan} \textbf{+3.06 (0.20)} &\cellcolor{LightCyan} \textbf{+2.18 (0.21)} &\cellcolor{LightCyan} \textbf{+2.42 (0.27)} &\cellcolor{LightCyan} \textbf{+1.49 (0.10)}  &\cellcolor{LightCyan} \textbf{+2.67 (0.56)}  \\ 
 \cmidrule{1-2}
\multirow{2}{*}{TK} & - &  55.68 (2.08) & 61.51 (2.45)  & 67.95 (0.45)  & 63.05 (5.15) & 67.95 (0.45) & 74.01 (3.58) \\
 & \plusours & \cellcolor{LightCyan} {\bf +7.18 (1.88)}$^*$ & \cellcolor{LightCyan} \textbf{+3.04 (1.07)}$^*$ &\cellcolor{LightCyan} \textbf{+2.75 (0.47)}$^*$  & \cellcolor{LightCyan}{\bf +1.91 (0.75)}$^*$ & \cellcolor{LightCyan}+0.54 ~(0.44)$^*$ & \cellcolor{LightCyan}\textbf{+3.23 (0.78)}$^*$   \\
\cmidrule{1-2}
PL-EM$^\text{\cite{pfeiffer2015overcoming}}$& - & 20.35 (0.05) & 25.25 (0.35)& 31.45 (1.95)& 31.70 (4.78)& 34.92 (5.87)&  48.70 (5.72)\\
ICA$^\text{\cite{lu2003link}}$& - & 31.17 (3.66) & 42.07 (1.29) & 57.14 (1.81) & 33.38 (4.69) & 45.93 (5.48) & 46.97 (5.19)  \\
GMNN$^\text{\cite{qu2019gmnn}}$& -& 49.36 (2.22) & 56.58 (2.96) & 67.83 (1.91) & 62.16 (4.40) & 63.42 4.82) & 74.78 (3.63)  \\
 \bottomrule
\end{tabular}
}
\caption{Node classification accuracy with \textbf{partially-labeled} test data varying number of \textbf{training labels} on Cora and Pubmed datasets. Numbers in bold represent significant improvement in a paired t-test at the $p < 0.05$ level, and numbers with $^*$ represent the best performing method in each column.}
\label{tab:results_labeled_learning_curve}
\vspace{-4mm}
\end{table*}

\section{Ablation study} \label{sec: ablation}
\subsection{With or without predicted labels as input}
 \begin{figure}[h]
\centering
\subfloat[][Cora, GCN]{
    \includegraphics[height=1.5in]{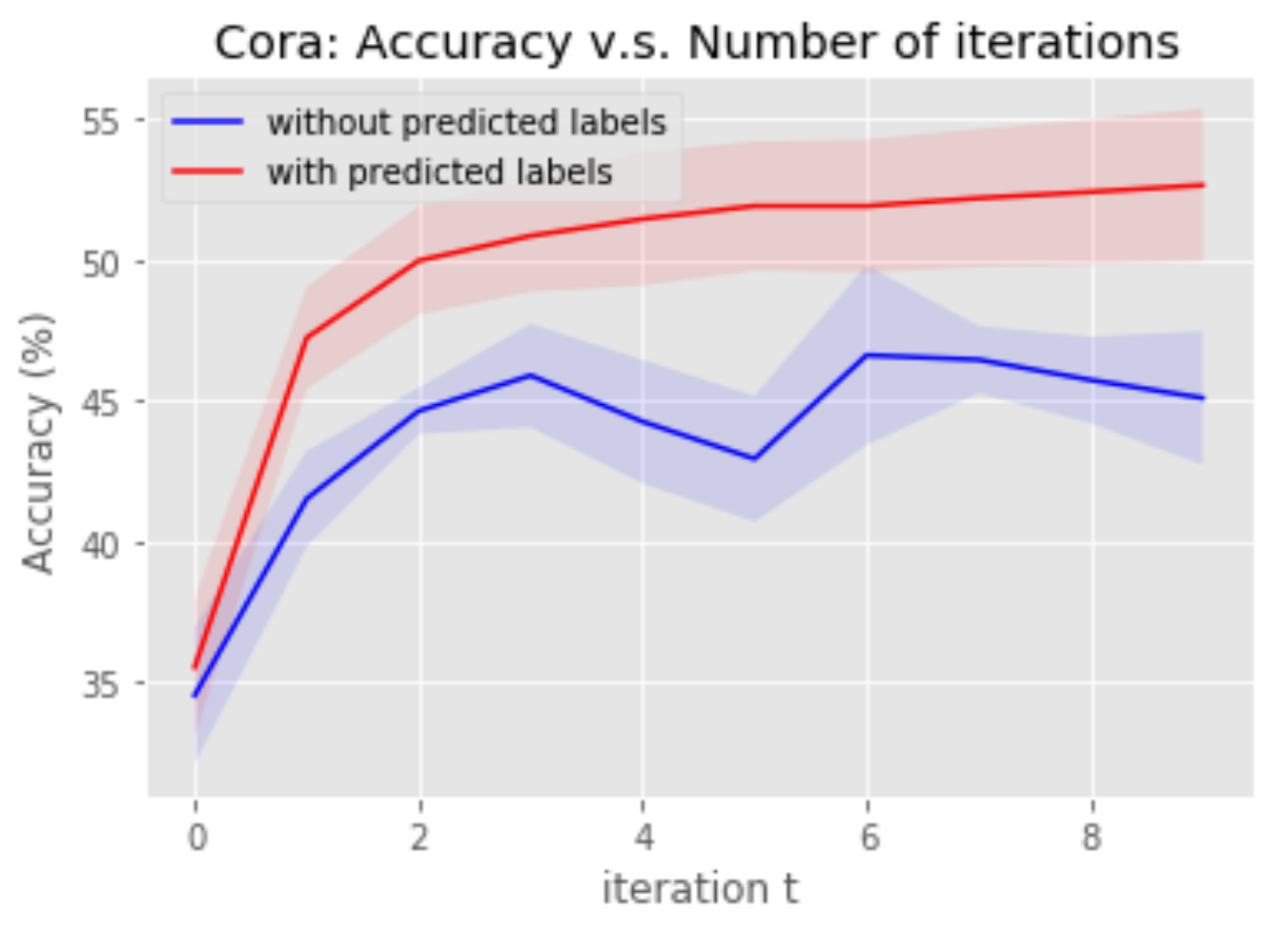}
    \label{fig: predict-gcn}
}
\vspace{-2.mm}
\subfloat[][Pubmed, TK]{
    \includegraphics[height=1.5in]{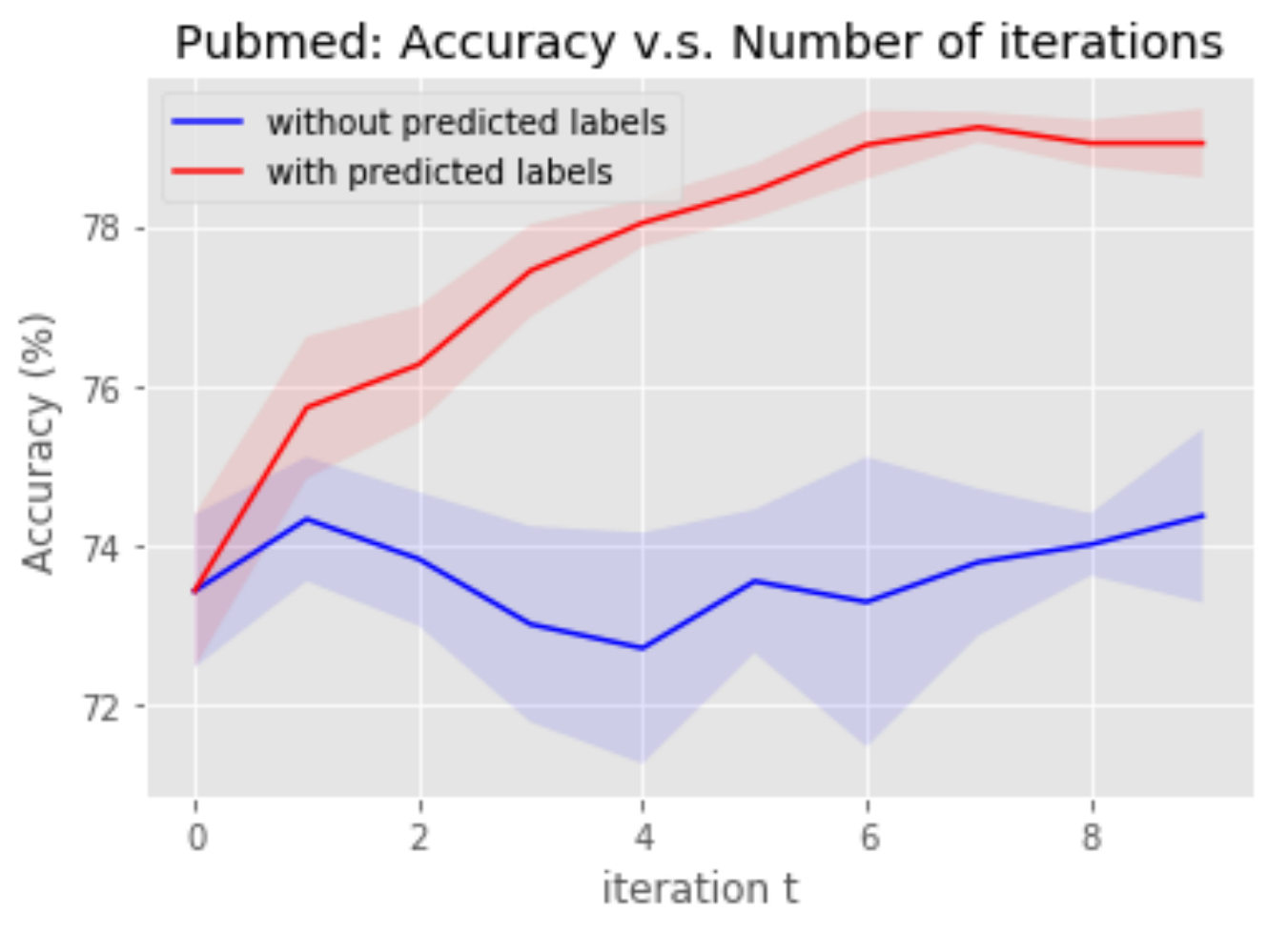}
    \label{fig: predict-tk}
}
\vspace{-1.mm}
\caption{\ourmodel performance with and without predicted labels on Cora and Pubmed. X-axis refers to iteration number $t$ in \Cref{sec:framework}}
\label{fig:no_pred}
\vspace{-4mm}
\end{figure}

 To investigate if adding predicted labels in model input adds extra information with partially-labeled test data, we tested the performance of a model variant which only use true labels as input with the same node masking procedure. \Cref{fig:no_pred} shows two examples on Cora with GCN (\Cref{fig: predict-gcn}) and Pubmed with TK (\Cref{fig: predict-tk}), where including predicted labels achieves better performance. We run the model 10 times and calculate the average and standard deviation (shown as shaded area) of classification accuracy at each iteration $t$ as described in \Cref{sec:framework}. We can see that adding predicted labels starts to improve the performance after the first iteration and achieves consistent gains.

\begin{table*}[h!]
\centering
\resizebox{\textwidth}{!}{ %
\begin{tabular}{llrrrrrrrrr}
 & & \multicolumn{3}{c}{Cora}    & \multicolumn{3}{c}{Pubmed} & \multicolumn{1}{c}{Friendster} & \multicolumn{1}{c}{Facebook} & \multicolumn{1}{c}{Protein}\\ 
 \cmidrule(lr){3-5}
 \cmidrule(lr){6-8} \cmidrule(lr){9-9}
 \cmidrule(lr){10-10} \cmidrule(lr){11-11}
{\bf \# labels}  & & \multicolumn{1}{c}{85 (3.21\%)} & \multicolumn{1}{c}{105 (3.88\%)} & \multicolumn{1}{c}{140 (5.17\%)} & \multicolumn{1}{c}{300 (1.52\%)} & \multicolumn{1}{c}{375 (1.90\%)}  &\multicolumn{1}{c}{600 (3.04\%)} & \multicolumn{1}{c}{641 (1.47\%)}           & \multicolumn{1}{c}{80 (1.76\%)} & \multicolumn{1}{c}{ 7607 (30\%)}     \\
\cmidrule(lr){1-2} 
\cmidrule(lr){3-5}
 \cmidrule(lr){6-8} 
 \cmidrule(lr){9-9}
 \cmidrule(lr){10-10}
 \cmidrule(lr){11-11}
Random & & 14.28 (0.00) & 14.28 (0.00)& 14.28 (0.00)& 33.33 (0.00)& 33.33 (0.00)& 33.33 (0.00)& 20.00 (0.00) & 50.00 (0.00) & 50.00 (0.00)\\ 
\cmidrule{1-2}
\multirow{2}{*}{GCN} &  -    & 45.15 (3.73) & 52.35 (2.01) &  65.11 (1.95) & 53.21 (4.04) & 57.15 (3.61)  & 70.81 (3.47)  &  29.80 (0.48)  &  65.89 (0.68)  & 73.03 (2.14)\\
 & \plusours-random & \cellcolor{LightCyan}  +0.02 (0.65)  &  \cellcolor{LightCyan} -1.83 (1.05)  &  \cellcolor{LightCyan}  +0.27 (0.18) &  \cellcolor{LightCyan} \textbf{+2.29 (0.34)}  &  \cellcolor{LightCyan} +1.35 (0.58)  &  \cellcolor{LightCyan} +1.05 (0.96)  &  \cellcolor{LightCyan}  -0.14 (0.40) &  \cellcolor{LightCyan}  \textbf{+1.16 (0.25)}  &  \cellcolor{LightCyan}  +0.16 (0.80)\\
\cmidrule{1-2}
\multirow{2}{*}{GS} & -      &46.38 (1.62) & 52.87 (1.03) & 63.46 (1.38) &  55.38 (3.48) &  57.61 (4.21) &  68.81 (4.15)  &  28.05 (0.56) &  65.20 (0.40) & 71.05 (0.40)\\
&\plusours-random &  \cellcolor{LightCyan}  -2.45 (0.22) &  \cellcolor{LightCyan} +0.46 (0.45) &  \cellcolor{LightCyan} -0.23 (0.67) & \cellcolor{LightCyan}  -0.02 (0.32)  &  \cellcolor{LightCyan}  +0.42 (0.31) & \cellcolor{LightCyan}  +0.34 (0.53)  & \cellcolor{LightCyan}  +0.21 (0.39)  &  \cellcolor{LightCyan} \textbf{+1.65 (0.15)} &  \cellcolor{LightCyan} +0.01 (0.22)\\ 
 \cmidrule{1-2}
\multirow{2}{*}{TK} & - & 61.99 (3.07) & 67.88 (1.80) &  73.04 (0.42) & 61.00 (4.93) &  61.91 (5.16) & 73.87 (3.99)  &  29.44 (0.39)  &  67.75 (0.40)  & 73.38 (0.57) \\
& \plusours-random &  \cellcolor{LightCyan}  -3.95 (1.08)  &  \cellcolor{LightCyan} -2.54 (0.63) &  \cellcolor{LightCyan} -2.28 (0.84) &  \cellcolor{LightCyan} -0.65 (0.56)& \cellcolor{LightCyan}  -0.78 (0.38)  &  \cellcolor{LightCyan}  -1.17 (0.78)  &  \cellcolor{LightCyan}  +0.26 (0.38)  &  \cellcolor{LightCyan} -0.05 (0.19)  &  \cellcolor{LightCyan} -0.13 (0.53) \\ \bottomrule
\end{tabular}
}
\caption{Node classification accuracy with \textbf{unlabeled} test data using \textbf{uniform sampling}. Numbers in bold represent significant improvement in a paired t-test at the $p < 0.05$ level.}
\label{tab:results_unlabeled_uniform}
\end{table*}


\subsection{Sampling from predicted labels or random ids}

Creating more expressive GNN representations by averaging out random features was first proposed by \citet{murphy2019relational}. 
\citet{murphy2019relational} shows a whole-graph classification application, Circulant Skip Links (CSL) graphs, where such randomized feature averaging is provably (and empirically) more expressive than GNNs.
Our Monte Carlo collective learning method can be seen as a type of feature averaging GNN representation though, unlike \citet{murphy2019relational}, the feature sampling is not at random, but rather driven by our own model recursively.
Hence, it is fair to ask if our performance gains are simply because random feature averaging is beneficial to GNN representations? 
Or does collective learning sampling actually improve performance?
We need an ablation study.

Therefore, in this section we investigate whether the gains of our method for unlabeled test data are from incorporating feature randomness, or from sampling w.r.t predicted labels (collective learning). To do so, we replace the samples drawn from previous prediction $\hat{\mY}$ as uniformly drawn from the set of class labels {\em at each gradient step} in \ourmodel. The results are shown in \Cref{tab:results_unlabeled_uniform}.
Clearly, the random features are not able to consistently improve the model performance as our method does (contrast \Cref{tab:results_unlabeled_uniform} with \Cref{tab:results_labeled} and \Cref{tab:results_unlabeled_learning_curve}). 
In summary, collective learning goes beyond the purely randomized approach of \citet{murphy2019relational}, providing much larger, statistically significant,  gains.

\end{document}